\theoremstyle{definition}
\newtheorem{definition}{Definition}
\newtheorem{lemma}{Lemma}
\newtheorem{condition}{Condition}
\pgfplotsset{compat=1.18}
\definecolor{codegreen}{rgb}{0,0.6,0}
\definecolor{codegray}{rgb}{0.5,0.5,0.5}
\definecolor{codepurple}{rgb}{0.58,0,0.82}
\definecolor{backcolour}{rgb}{0.95,0.95,0.92}
\lstdefinestyle{mystyle}{
    backgroundcolor=\color{backcolour},
    commentstyle=\color{codegreen},
    keywordstyle=\color{magenta},
    numberstyle=\tiny\color{codegray},
    stringstyle=\color{codepurple},
    basicstyle=\ttfamily\footnotesize,
    breakatwhitespace=false,
    breaklines=true,
    captionpos=b,
    keepspaces=true,
    numbers=left,
    numbersep=5pt,
    showspaces=false,
    showstringspaces=false,
    showtabs=false,
    tabsize=2
}
\definecolor{iccvblue}{rgb}{0.21,0.49,0.74}
\title{Is CLIP ideal? No. Can we fix it? Yes!}
\author{Raphi Kang \quad Yue Song \quad  Gerogia Gkioxari \quad  Pietro Perona\\
California Institute of Technology\\
Pasadena, CA\\
{\tt\small rkang@caltech.edu}
}
\begin{document}
\maketitle
\setlength{\abovedisplayskip}{1pt}
\setlength{\belowdisplayskip}{1pt}

\begin{abstract}
Contrastive Language-Image Pre-Training (CLIP) is a popular method for learning multimodal latent spaces with well-organized semantics.
Despite its wide range of applications, CLIP's latent space is known to fail at handling complex visual-textual interactions. 
Recent works attempt to address its shortcomings with data-centric or algorithmic approaches. But what if the problem is more fundamental, and lies in the geometry of CLIP?
Toward this end, we rigorously analyze CLIP's latent space properties, and prove that no CLIP-like joint embedding space exists which can correctly do any two of the following at the same time: 1. represent basic descriptions and image content, 2. represent attribute binding, 3. represent spatial location and relationships, 4. represent negation. 
Informed by this analysis, we propose Dense Cosine Similarity Maps (DCSMs) as a principled and interpretable scoring method for CLIP-like models, which solves the fundamental limitations of CLIP by retaining the semantic topology of the image patches and text tokens. This method improves upon the performance of classical CLIP-like joint encoder models on a wide array of benchmarks. We share our code and data here for reproducibility: \url{https://github.com/Raphoo/DCSM_Ideal_CLIP}
\end{abstract}

\section{Introduction}

Contrastive Language-Image Pre-Training (CLIP) is a widely used method for pretraining vision language model (VLM) embeddings in downstream applications~\cite{radford_learning_2021}. It jointly trains an image and text encoder, mapping both modalities into a shared latent space. In this space, cosine similarity between embeddings reflects semantic similarity between the text and image: a high value indicates a semantic match between text and image, while a low value suggests that they are unrelated.

CLIP is simple, computationally efficient, and CLIP-based zero-shot multimodal tasks such as image classification and text-based retrieval can be impressively accurate. Unlike autoregressive models where an increasingly large number of images and texts to compare requires prohibitively longer context windows, CLIP can process each image and text prompt separately and only requires a simple inner product calculation for scoring. As a result, the list of systems that use CLIP embeddings toward more complex capabilities, such as image captioning \cite{BLIP_li}, visual question answering \cite{shen_how_2022}, and text-guided image manipulation/generation \cite{kim_diffusionclip_2022,nichol_glide_2022}, gets longer by the day.

\begin{figure}[t]
    \centering
\includegraphics[width=0.99\linewidth]{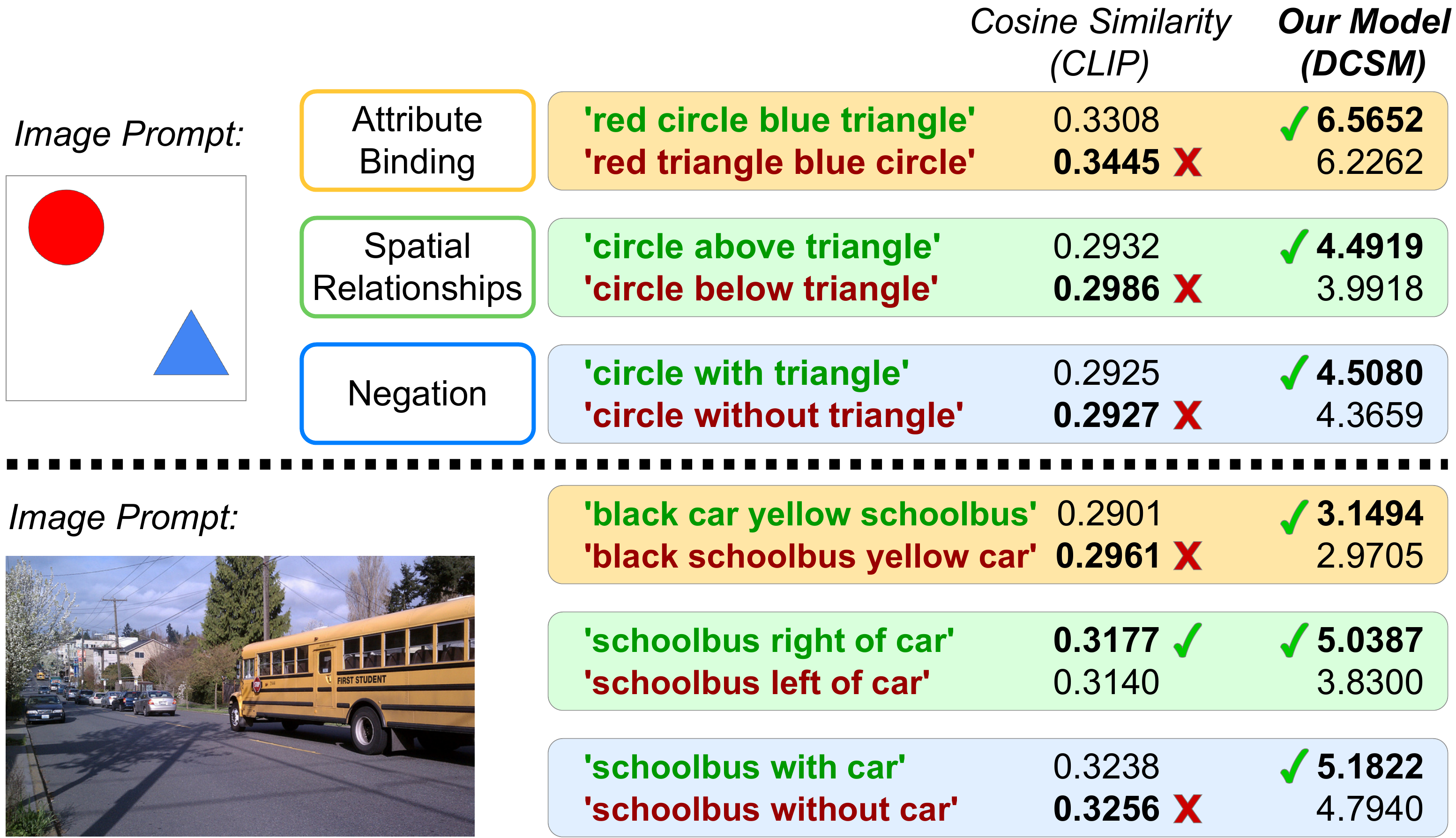}
\vspace{-3mm}
    \caption{\textbf{CLIP scores do not accurately reflect semantics of text prompts} due to inherent geometric limitations. For five out of six pairs of captions, the incorrect text has the higher CLIP score with the image. By contrast, our proposed Dense Cosine Similarity Maps (DCSM) correctly scores matched pairs. The similarity score is unnormalized because it is predicted by a neural network. CLIP scores computed with OpenAI-CLIP ViT-B/32.} 
\label{fig:simple_cond_show}
\vspace{-7mm}
\end{figure}

CLIP, however, is not perfect.  It struggles with spatial reasoning and compositional understanding  \cite{yuksekgonul_when_2023,kamath_whats_2023,tong_eyes_2024}, concept and attribute binding \cite{lewis_does_2024,newman_pre-trained_2024,campbell_understanding_2024}, as well as negation \cite{alhamoud_vision-language_2025,singh_learn_2024}, to name a few shortcomings. Examples in Fig.~\ref{fig:simple_cond_show} demonstrate these failure modes.
These defects of CLIP impact downstream models and tasks. For example, search engines which use CLIP will bring up images of yellow coats to the prompt ``not a yellow coat", and CLIP-based generative models cannot reflect spatial relationships in the text prompt.
Ever since CLIP's conception, the vision-language community has worked to improve its semantics via adjustments in the training data~\cite{yuksekgonul_when_2023,wang_boosting_2024,singh_learn_2024,alhamoud_vision-language_2025,paiss_teaching_2023,radenovic_filtering_2023,koohpayegani_genie_2024,wang_advancing_2024,li_laion-sg_2024} or the architecture and training procedure~\cite{li_covlm_2023,li_interpretable_2024,shao_explore_2024,chen_spatialvlm_2024,wang_sclip_2023,menon_visual_2022, zhai_sigmoid_2023}.
But could we be pushing Sisyphus's boulder? 
In this work, we suggest to take a step back and reassess CLIP's basic geometry and philosophy from first principles.


We ask two questions: (1) \textit{Is it possible for there to exist a CLIP-like latent space which has all the properties necessary to succeed in popular VLM tasks, using cosine similarity as the reference metric?}
(2) \textit{If no such space exists - is there a way to ``rescue" this latent space without forsaking CLIP altogether?}

To address these queries, we first formalize the geometry of the CLIP latent space and establish a list of conditions which must be satisfied in order for it to understand the precise meaning of images and texts, as required by popular VLM benchmarks \cite{tong_eyes_2024, yuksekgonul_when_2023, kamath_whats_2023, lewis_does_2024, alhamoud_vision-language_2025, hsieh_sugarcrepe_2023}. Second, we show that these conditions are\textit{ not achievable} when using cosine similarity on the unit hypersphere. This answers the first question: an ideal CLIP space \textit{cannot} exist. Finally, we propose a lightweight downstream CNN which utilizes a topological map relying on cosine similarity scores between each text token and image patch embedding, to produce a more perceptive text-image distance score in lieu of naive cosine similarity in CLIP-space. 


In summary, our contributions are:

\begin{itemize}
    \item  \textbf{Problem Identification}: We find that naive cosine similarity on unit vector embeddings have fundamental geometric restrictions preventing it from accurately representing (1) attribute binding to distinct object concepts, (2) spatial relationships and location, and (3) negation.
    
    \item \textbf{Analysis/Proof}: We define the CLIP latent space as a projection of images and texts which are composed of atomic objects, attributes, and spatial relationships, and formalize desired conditions for the latent space in order for it to succeed on existing VLM benchmarks. We prove for each condition that no satisfactory vector space exists. 
    
    \item \textbf{Topology as a Solution}: We propose to use a Dense Cosine Similarity Map (DCSM) from pre-trained CLIP to produce a more comprehensive text-image score. We present a prototype with a simple two-layer CNN module. 
    
    \item \textbf{Experimental evaluation}: We evaluate our method on multiple benchmark regimes against CLIP-like models, and observe consistent performance gain across tasks. 
\end{itemize}
\section{Prior Work}
\label{sec:prior-work}



\noindent\textbf{Vision Language Models.} Building upon the principles of large-scale contrastive pretraining and joint representation learning, recent CLIP-like VLMs~\cite{jia2021scaling,yaofilip2022,yaofilip2022,mu2022slip,li2021align,yu2022coca} have significantly bridged the gap between vision and language. Beyond CLIP models, autoregressive VLMs ~\cite{liu2024improved,liu2024visual} have emerged as compelling alternatives by jointly attending to both text and image embeddings. Neurosymbolic program synthesis methods~\cite{surís2023vipergptvisualinferencepython,gupta2022visualprogrammingcompositionalvisual,marsili2025visualagenticaispatial} also mitigate some of CLIP's limitations by formulating text-image semantic distance acquisition as several sub-problems. While these methods are more comprehensive than CLIP in complex visual reasoning, most of them rely on a latent space of CLIP-like models as a core submodule. As such, there remains a strong motivation to continue improving CLIP architectures by addressing their inherent shortcomings.

\noindent\textbf{Empirical Limitations of CLIP.} A growing body of work has revealed several limitations of CLIP in handling complex visual-text interactions. Specifically, CLIP struggles with accurately binding attributes to concepts in multi-object scenes~\cite{lewis_does_2024, newman_pre-trained_2024, campbell_understanding_2024}, exhibits misinterpretations of object layouts or conflates multiple entities within a single scene~\cite{yuksekgonul_when_2023, lewis_does_2024,newman_pre-trained_2024}, and has difficulty in accurately representing negation~\cite{alhamoud_vision-language_2025,singh_learn_2024}. To systematically evaluate these limitations, various vision language benchmarks have been proposed targeting different tasks, including attributes, relationships, and order (ARO)~\cite{yuksekgonul_when_2023}, Sugarcrepe~\cite{hsieh_sugarcrepe_2023}, VL-checklist~\cite{zhao_vl-checklist_2023}, WhatsUp \cite{kamath_whats_2023}, Multimodal Visual Patterns (MMVP)~\cite{tong2024eyes}, and NegBench~\cite{alhamoud_vision-language_2025}. 

\begin{figure*}[t]
    \centering
    \includegraphics[width=0.95\linewidth]{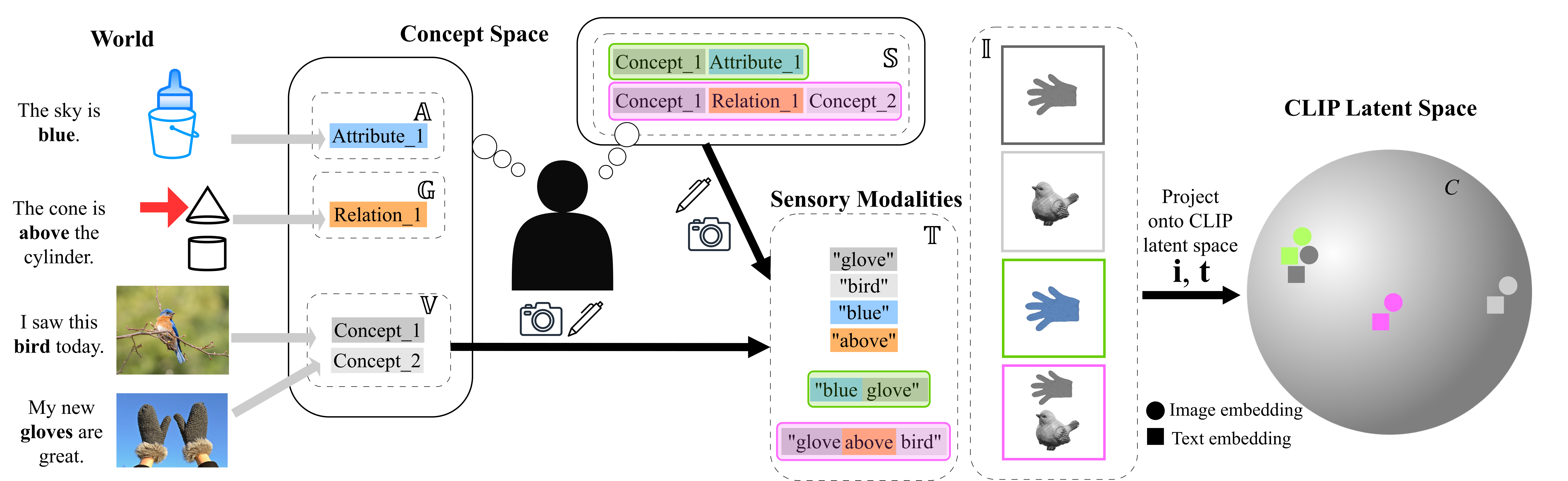}
    \vspace{-3mm}
    \caption{\textbf{Graphical illustration of defined concept sets}.
    Humans can parse visual stimuli from the real world and organize them into Object Concepts $\mathbb{V}$, Attributes $\mathbb{A}$ which adorn objects, and Relationships $\mathbb{G}$ between objects. These concepts can be ordered into Composed Scenes $\mathbb{S}$. Here, $\mathbb{V},\mathbb{A}, \mathbb{G}, \text{ and }\mathbb{S}$ are strict subsets of the set of all real world concepts.
    We can communicate these composed concepts via language $\mathbb{T}$, or by taking pictures of exemplars $\mathbb{I}$. These sentences and images can be projected onto the CLIP latent space \textit{C} via a text encoder \textbf{t} or an image encoder \textbf{i}. Elements in distinct sets with the same color have a one-to-one correspondence.}
    \label{fig:defn_explain}
    \vspace{-5mm}
\end{figure*}

\noindent\textbf{Geometric Analysis of CLIP.} 
Several studies have examined the geometric properties of CLIP. Some works empirically assess the CLIP latent space and find that existing CLIP models underutilize the latent space by having a modality gap between text and image embeddings \cite{liang_mind_2022}, being highly anisotropic even within the same modality~\cite{levi_double-ellipsoid_2024},
and that CLIP training is empirically unstable~\cite{steck_is_2024}. Beyond these empirical studies, one work rigorously defines CLIP as a mapping from images and texts to a latent unit hypersphere~\cite{brody_potential_2023}.
Our work extends this perspective.

Other works propose alternative geometries/similarity measures for CLIP. For instance, \cite{desai2023hyperbolic} and
\cite{pal2025compositional} present hyperbolic CLIP using Lorentzian distance.

These show
great potential, but as they do not aim to offer formal logical analysis of geometries/metrics, there is no conclusive
evidence that they resolve the fundamental ambiguity in
CLIP. 

Extended related work is provided in the Supplements.






\section{The Ideal CLIP}

The goal of CLIP is ``to efficiently learn visual concepts from natural language supervision''~\cite{radford_learning_2021}. 
What are {\em visual concepts}? We first establish formal notations with which to discuss visual concepts in the latent space. Once we have the tools to discuss CLIP mathematically, we will formalize the properties necessary to make CLIP ideal.

Fig.~\ref{fig:defn_explain} illustrates how we think about the relationship between the world, its representation in images and language, and the eventual projection of images and text onto CLIP space.
In this view, the ``gist'' of scene perception is broken down into categorization of objects, their salient attributes, and the spatial layout of the scene, as proposed by the cognitive psychology literature \cite{Biederman1987-sg,rosch_principles_1988,li2002rapid,Oliva2005,fei2007we,Wolfe2017}. Theories of perceptual symbol systems \cite{Barsalou1999-ze, Barsalou2003-hl, 9a90845cf49e42f99e597f54212ae629} propose that object categories and spatial relationships are atomic symbols, combinatorially and recursively combined to create scene representations. The Dual Coding theory of cognition \cite{Paivo1981-hl, Paivio1990, Paivio2013} proposes that these composed concepts can be represented as language, imagery, or both. 

In the following we formalize these intuitions with precise definitions, with the goal of analyzing CLIP properties.

\subsection{Concepts, attributes, and compositions}
\label{sec-definitions}

This section sets up a simplified world, which is a strict subset of the real world, within which we will explore whether it is possible to have an {\it ideal } CLIP. Our goal is to show that this is not possible and therefore, {\it a fortiori}, an ideal CLIP is not possible in more general worlds.

\begin{definition} \textbf{(CLIP Embedding Space $C$).}
CLIP embeddings are unit vectors which occupy an $N$-dimensional unit hypersphere. 
Let $\mathbb{I}$ denote any set of images and let $\mathbb{T}$ any set of texts. Then a CLIP-like model is a pair of functions (\textbf{i, t}) where for some Euclidean vector space $C$, consisting of $N$-dimensional unit vectors, we have that \textbf{i} : $\mathbb{I}$ → $C$ and \textbf{t} : $\mathbb{T}$→ $C$. That is, \textbf{i} and \textbf{t} are injective functions which map images and text descriptions into $C \in S^{N-1}$ where $S^{N-1}$ denotes the unit-($N$-1) sphere: ${S^{N-1}=\left\{\textbf{i}(x), \textbf{t}(x)\in \mathbb{R}^{N}:\left\|\textbf{i}(x)\right\|_2=\left\|\textbf{t}(x)\right\|_2=1\right\}}$. 

\end{definition}

\begin{definition} \textbf{(Atomic Concept Set $\mathbb{V}$).}
In the real world, there exist $\gg N$ object concepts. These have hierarchy, and can be homonyms, polysemous, heteronyms, or synonyms. Let $\mathbb{V}$ be a small subset of this real world object concept set, where every $x$ in $\mathbb{V}$ has a distinct visual and linguistic counterpart, and all $x \in \mathbb{V}$ are semantically mutually exclusive, \emph{e.g.} no $x$ is a subclass of another. We select $\mathbb{V}$ so that it contains fewer than $N$ objects: $|\mathbb{V}| = M \leq N$.

Each concept, \emph{e.g.} "apple", corresponds to many physical objects, and each object may be portrayed in many different images. Now consider one representative element, one image, for each set of images corresponding to a concept, and one representative element for each set of texts corresponding to a concept. In this way, we are defining injective functions $f_{V,I} $: $ \mathbb{V} \rightarrow \mathbb{I}$ and $f_{V,T} $: $  \mathbb{V} \rightarrow \mathbb{T}$ which act upon every element in $\mathbb{V}$. For the representative image which visually represents concept $x \in \mathbb{V}$, let its CLIP embedding be: $\textbf{i}(f_{V,I}(x))$, which unless otherwise specified we will denote as $\textbf{i}(x) \in  S^{N-1}$. Similarly we denote some unique text representation of $x$ as $\textbf{t}(f_{V,T}(x)) = \textbf{t}(x) \in S^{N-1}$. 
 Justifications for assumptions are provided in the supplements.
 


\end{definition}

\begin{definition} \textbf{(Attribute Representation Set $\mathbb{A}$).}
In the real world, there exist $\gg N $ attribute concepts. An attribute is not itself an object, but rather a property of the object, like color, size, and material. 
Then let $\mathbb{A}$ be a subset of this large attribute set, where every $a$ in $\mathbb{A}$ is a discrete and visually distinct attribute, and can be applied to every instance of $x$ in $\mathbb{V}$.
For every $a \in \mathbb{A}$, $a$ has one equivalent item in $\mathbb{T}$, such that there exists some injective function $f_{A,T} $: $ \mathbb{A} \rightarrow \mathbb{T}$. 

Let $\textbf{i}(x_a) $ denote an image embedding where concept $x$ has attribute $a$. 
 For any given object, some attributes are highly characteristic (\emph{e.g.,}``red" is a defining feature for an ``apple"), some are relatively neutral (\emph{e.g.,} ``red" for a ``car" may not be as informative), and some are unlikely to be associated with that object (\emph{e.g.,} ``red" for a ``raccoon").
 The representative embedding $\textbf{i}(x)$ $\forall x \in \mathbb{V}$ implicitly portrays attributes that are most likely or typical for that concept. 

\end{definition}

\begin{definition} \textbf{(Compositional Concept Set $\mathbb{G}$)}
In the real world, there exist $\gg N$ compositional concepts, \emph{i.e.,} concepts which describe the composition of a scene. 
Then let $\mathbb{G}$ be a subset of the real compositional concept set, consisting only of elements which describe the absolute location of one object (denoted $g^{<loc>}$) or the spatial relationship between two objects (denoted $g^{<rel>}$).
Each element in $\mathbb{G}$ has a unique equivalent text description in $\mathbb{T}$. That is, there exists an injective function $f_{G,T} $: $ \mathbb{G} \rightarrow \mathbb{T}$. 

For example, for the compositional concept of one object being above another object, we use the notation $g^{<rel>}_{above}$, which has the text representation  $\text{``above"}$.
(In the next definition we explain our notation for using compositional concepts in context with other concepts.) 
When two objects $x$ and $y$ appear in the same image, a spatial relationship is there between them. If one wishes to marginalize this aspect out one may use the notation $\textbf{i}(x,y)$ 
to denote the mean embedding of images depicting concepts $x$ and $y$, averaged over all spatial arrangements of  $x$ and $y$ in the fixed frame.


\end{definition}
\label{def-combinatorial}

\begin{definition} \textbf{(Combinatorial Clause Set 
$\mathbb{S}$) } 
In the real world, object, attribute, and compositional concepts can be organized to create semantic \textit{clauses}. These are a combination of concepts which add context to a unified scene.
Let $\mathbb{S^*}$ be a set of all finite ordered combinations of elements from $\mathbb{V}$, $\mathbb{A}$, and $\mathbb{G}$.  It is a union of all \textit{n}-fold Cartesian products of the full outer joined set of the three. 
$$\mathbb{S^*} = \bigcup_{n=1}^{\infty} \bigl(\mathbb{A} \cup \mathbb{G} \cup \mathbb{V}\bigr)^n$$
Every element in $\mathbb{S^*}$ has an equivalent text description in $\mathbb{T}$. Now, let $\mathbb{S}$ be a subset of elements from $\mathbb{S}^*$ whose textual counterpart is grammatically correct, such that there exist equivalent and distinct mappings between every item in $\mathbb{S}$ and $\mathbb{I}$,$\mathbb{T}$. This means there exist injective functions $f_{S,I} $: $ \mathbb{S} \rightarrow \mathbb{I}$ and $f_{S,T} $: $  \mathbb{S} \rightarrow \mathbb{T}$. For example, the joined clause of the object concept of a glove and the attribute concept of the color blue, $s = (x_{glove}, a_{blue}) \in \mathbb{S}$,
maps to the textual phrase ``blue glove" 
and to an image of a blue glove. 

By convention, any $g^{<loc>}$ declares the location of the preceding object in the clause. That is, $f_{S,T}((x, g_{right})) = \text{``$x$ to the right"}$. Similarly, any $g^{<rel>}$ in between two objects influences both in that order: $f_{S,T}((x, g_{below}, y)) = \text{``$x$ below $y$"}$. Combinations of concepts are shown in green in Fig.~\ref{fig:defn_explain}. 
For simplicity, we denote $\textbf{i}(f_{S,I}((x,a))) = \textbf{i}(x_a)$ unless otherwise specified. 
Similarly, we denote scenes composed of multiple objects as $ \textbf{i}(f_{S,I}((x,y))) = \textbf{i}(x,y), \forall x,y \in \mathbb{V}$ as shorthand.

\end{definition}




\begin{table}[t!]
     \centering
     \label{tab:sum_notaitons}
     \resizebox{0.99\linewidth}{!}{
     \begin{tabular}{c|l}
         \toprule
         Notation & Explanation  \\
         \midrule
        
         $\textbf{i}(\cdot), \textbf{t}(\cdot)$ & CLIP image and text embeddings \\
         $\mathbb{I},\mathbb{T}$ & Set of images and texts\\
         $\mathbb{V}$ & Set of atomic object concepts\\
         $\mathbb{A}$ & Set of atomic attributes\\
         $\mathbb{G}$ & Set of compositional relations\\
         $\mathbb{S}$ & Finite ordered combinations of elts from $\mathbb{V},\mathbb{A}$, $\mathbb{R}$\\
         $\textbf{i}(x_a)$ & Image embedding of object $x$ with attribute $a$ \\
         \bottomrule
     \end{tabular}
    }
     \vspace{-3mm}
    \caption{Summary of notations.}
    \vspace{-6mm}
\end{table}

\subsection{Geometric Requirements for an Ideal CLIP}
We will assume that CLIP encoders can project any image and text as vectors in an ``ideal'' location in the latent space \textit{iff} there exists such a location. By ``ideal location'', we mean the semantic distance between image and text embeddings aligns well with human understanding. 
If under this oracle encoder setting we find inherent contradictions which prohibit an ``ideal" latent space from existing, the conclusions will generalize to the real life setting which is a superset of the concepts we consider, with non-oracle \textbf{i}, \textbf{t}. We justify the need for each condition in the supplements. $C$ is ``ideal'' if it fulfills all following conditions:

\begin{condition}\textbf{(Concept Categorization)}
Satisfaction of this condition requires that
(1.1) $C$ represents basic descriptions and image content. 
\begin{eqnarray*}
\textbf{i}(x)\cdot \textbf{t}(x) &> & \textbf{i}(x)\cdot \textbf{t}(y) \\
\textbf{i}(x,y)\cdot \textbf{t}(x) &> & \textbf{i}(x,y)\cdot \textbf{t}(z) \ \ \ \  \forall \ x \text{, } y \text{, } z\
\in\mathbb{V}  
\end{eqnarray*}
\noindent
(1.2) Images that contain the same semantic concept(s) but differ 
due to an attribute or scene composition, should have higher cosine similarity with each other than with an image that contains a different set of semantic concepts. 
\begin{eqnarray*}
    \textbf{i}(x_a)\cdot \textbf{i}(x_b) &>& \textbf{i}(x_{a})\cdot \textbf{i}(y) \\ \textbf{i}(x,g_1^{<loc>})\cdot \textbf{i}(x,g_2^{<loc>})& >& \textbf{i}(x)\cdot \textbf{i}(y)\\
\forall \text{ } x,y \text{ } \in\mathbb{V} \text{, } \forall \text{ }a,b \in\mathbb{A} \text{, } & & \forall\text{ }g_1^{<loc>}, g_2^{<loc>}\in\mathbb{G}
\end{eqnarray*}
\end{condition}

\begin{condition}\textbf{(Attribute Binding)}
$C$ respects attribute binding. More specifically: (2.1) concepts with different attributes are not parallel in CLIP space. \begin{equation}
     \textbf{i}(x_a)\cdot \textbf{i}(x_b)<1 \quad \forall a,b \in \mathbb{A} \nonumber
\end{equation}

\noindent (2.2) Images representing a concept with a specific attribute are closer in CLIP space to its text embedding.
\begin{equation}
\textbf{i}(x_a)\cdot \textbf{t}(a)>\textbf{i}(x_b)\cdot\textbf{t}(a)
 \nonumber
\end{equation}

\noindent (2.3) Images with the same concepts and attributes present but in different pairings are not parallel in CLIP space.
\begin{equation}
 \textbf{i}(x_a, y_b)\cdot \textbf{i}(x_b, y_a)<1
 \nonumber
\end{equation}



\end{condition}

\begin{condition} \textbf{(Spatial Relationship)}
C respects spatial locations or relationships of objects. This requires that (3.1) images where the same object is in a different location must not have identical embeddings. 
\begin{equation} 
\textbf{i}(x,g_1^{<loc>}) \cdot \textbf{i}(x,g_2^{<loc>}) < 1, \quad \forall g_1^{<loc>}, g_2^{<loc>} \in \mathbb{G}
 \nonumber
\end{equation}


\noindent (3.2) Images with the same objects but in different spatial relationships must not have identical embeddings. 
\begin{equation} 
\textbf{i}(x,g_3^{<rel>},y) \cdot \textbf{i}(x,g_4^{<rel>},y) < 1,\quad \forall g_3^{<rel>}, g_4^{<rel>} \in \mathbb{G} \nonumber
\end{equation}

\noindent (3.3) Images where an object is in the same location or relationship must be semantically closer than images where it is in a different location or relationship.
\begin{equation} 
\textbf{i}(x,g_1,y) \cdot \textbf{i}(x,g_1,z) > \textbf{i}(x,g_1,y) \cdot \textbf{i}(x,g_2,z)
\nonumber
\end{equation}







\end{condition}




\begin{condition} \textbf{(Negation)}
$C$ respects negation. This requires that (4.1) texts and their negated counterparts must have a similarity score lower than any other pairs.
\begin{equation}
    \textbf{t}(x)\cdot\textbf{t}(\neg x)<\textbf{t}(y)\cdot\textbf{t}(\neg x), \quad \forall\ x,y \in \mathbb{T}\nonumber
\end{equation}

\noindent (4.2) An image with some concept must have a lower similarity score with the negated concept text than another text.
\begin{equation}
\textbf{i}(x)\cdot\textbf{t}(\neg x)<\textbf{i}(x)\cdot\textbf{t}(y)\nonumber
\end{equation}

\noindent (4.3) Two distinct negated concepts must have higher cosine similarity than the same distinct non-negated concepts, as they have greater semantic overlap.\footnote{If $|\mathbb{V}| = M$, ``not $x$" and ``not $y$" correctly describe $M-2$ concepts, while ``$x$" and ``$y$" describe none of the same things.}
$$\textbf{t}(x)\cdot \textbf{t}(y)<\textbf{t}(\neg y)\cdot\textbf{t}(\neg x)$$


\end{condition}


\section{Geometric Contradictions in CLIP}
We aim to prove that if some $C$ meets Condition 1, it cannot meet any other condition to be ideal. Therefore no $C$ exists which is ideal. Due to space limitations, we only illustrate in detail how fulfillment of Condition 1 prohibits fulfillment of Condition 2 then summarize the rest. Please refer to the Supplements for detailed proofs of the other conditions.


\subsection{Contradiction for Conditions 1 and 2}


\begin{lemma}
\textbf{Embeddings of images or texts with two obect concepts must be a linear superposition of the respective single object concept embeddings.}
\end{lemma}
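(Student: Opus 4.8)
The plan is to reduce the statement to a one-line linear-algebraic fact and then extract that fact from Condition~1. Fix distinct $x,y\in\mathbb{V}$; the goal is to show the scene embedding $\textbf{i}(x,y)$ (and, symmetrically, the text embedding of the clause ``$x$ and $y$'') lies in $\mathrm{span}\{\textbf{i}(x),\textbf{i}(y)\}$. Write $\textbf{i}(x,y)=\alpha\,\textbf{i}(x)+\beta\,\textbf{i}(y)+\gamma\,w$ with $w$ a unit vector orthogonal to both $\textbf{i}(x)$ and $\textbf{i}(y)$, so that the lemma amounts to $\gamma=0$. Because $\textbf{i}(x,y)$ is, by definition, the mean image embedding over \emph{all} spatial arrangements of $x$ and $y$, it is invariant under the swap $x\leftrightarrow y$; hence $\alpha=\beta$ too, and the conclusion is really $\textbf{i}(x,y)\propto\textbf{i}(x)+\textbf{i}(y)$ after renormalisation. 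The same decomposition will then be applied to the attribute-decorated scenes $\textbf{i}(x_a,y_b)$, which is the form actually needed to contradict Condition~2.

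To force $\gamma=0$ I would use that Condition~1 is required not only for $\mathbb{V}$ but for \emph{every} admissible choice of at most $N$ mutually exclusive concepts drawn from the (arbitrarily large) real-world pool, whereas $\textbf{i}(x,y)$ and all the single-concept vectors $\textbf{i}(z),\textbf{t}(z)$ are fixed points of $S^{N-1}$. Two consequences of Condition~1 carry the argument: (i)~Condition~1.1 says $\textbf{i}(x,y)$ has strictly larger inner product with $\textbf{t}(x)$ and with $\textbf{t}(y)$ than with $\textbf{t}(z)$ for every other concept $z$; (ii)~the scene-level instances of Condition~1.2 say that two scenes sharing a concept are strictly closer than a scene and an unrelated single concept. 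If $\gamma\neq0$ then $w$ is a semantic axis of $\textbf{i}(x,y)$ carried by neither $\textbf{i}(x)$ nor $\textbf{i}(y)$; enlarging $\mathbb{V}$ with a concept $z$ whose embedding is aligned with $w$ (possible since the concept pool is rich enough to populate the sphere) makes $\textbf{i}(x,y)\cdot\textbf{t}(z)$ register that axis, and playing this against (i)--(ii) over a few different completions of $\mathbb{V}$ contradicts the strictness. The text side and the attributed variants are handled identically.

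The hard part is exactly this last step: upgrading the \emph{soft} strict inequalities of Condition~1 to the \emph{exact} identity $\gamma=0$ rather than merely ``$\gamma$ small'', since any single scene in isolation can tolerate a tiny spurious component. What should rule it out is the need for Conditions~1.1--1.2 to hold simultaneously for all $\binom{M}{2}$ two-object scenes and all $M$ singletons within a space of only $N\ge M$ dimensions, and for every admissible enlargement of the concept set --- a packing/quantifier argument. Making that quantitative, and checking uniformity across the image case, the text case, and the attribute-decorated case, is where the real work lies; granted $\gamma=0$ and $\alpha=\beta$, everything else is bookkeeping.
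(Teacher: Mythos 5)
Your decomposition $\textbf{i}(x,y)=\alpha\,\textbf{i}(x)+\beta\,\textbf{i}(y)+\gamma w$ is a reasonable framing, and the swap-symmetry argument for $\alpha=\beta$ is fine, but the step you yourself flag as "the hard part" --- forcing $\gamma=0$ exactly --- is a genuine gap, and the route you sketch cannot close it. Condition~1 only imposes strict inequalities with no quantitative margin: adjoining a new concept $z$ whose embedding is aligned with $w$ gives $\textbf{i}(x,y)\cdot\textbf{t}(z)\approx\gamma$, which contradicts Condition~1.1 only if $\gamma$ is at least as large as $\textbf{i}(x,y)\cdot\textbf{t}(x)\approx\alpha$ (plus cross terms). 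A small but nonzero spurious component along a direction orthogonal to every admissible concept embedding survives all the inequalities in Conditions~1.1--1.2, for every admissible enlargement of $\mathbb{V}$ (which is in any case capped at $M\le N$ concepts), so no packing or quantifier argument over completions of the concept set can upgrade "$\gamma$ small'' to "$\gamma=0$''. Since the downstream contradiction with Condition~2.3 in the paper rests on the \emph{exact} identity $\textbf{i}(x_a,y_b)=\textbf{i}(x_b,y_a)$, leaving $\gamma$ merely bounded does not suffice as stated (the paper handles residual noise separately, by a robustness analysis in the supplement, not through Condition~1).

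The paper avoids this issue by taking a different, more modeling-flavored route: it does not try to deduce the superposition from the inequalities directly, but instead \emph{defines} the ideal placement of $\textbf{i}(x^1,x^2)$ as the maximizer, over the unit sphere, of a linear score encoding Condition~1.1 --- the sum of similarities to $\textbf{i}(x^1)$ and $\textbf{i}(x^2)$ minus the similarities to all other concept embeddings. The maximizer of a linear functional on the sphere is the normalized coefficient vector, namely $2\textbf{i}(x^1)+2\textbf{i}(x^2)-\sum_{j}\textbf{i}(x^j)$, and invoking approximate isotropy, $\sum_{j}\textbf{i}(x^j)\approx 0$, yields exactly $\textbf{i}(x^1,x^2)=\bigl(\textbf{i}(x^1)+\textbf{i}(x^2)\bigr)/\left\|\textbf{i}(x^1)+\textbf{i}(x^2)\right\|$. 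If you want to repair your argument, you either need to adopt some such extremal characterization of "ideal location'' (which is where the exactness actually comes from), or restructure the overall impossibility proof so that it only needs $\gamma$ small, with an explicit perturbation analysis in the later lemmas.
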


\begin{proof}
We derive the ideal placement for $\textbf{i}(x^1, x^2)$ to satisfy Condition 1.1. For this proof, instead of $x,y$ we denote distinct concepts with superscripts: $\textbf{i}(x^1), \textbf{i}(x^2),... $ for $k\in[1,M]$ (to avoid confusion with attributes, which are denoted as subscripts.)
The condition states $\textbf{i}(x^1, x^2)$ must have high cosine similarity with $\textbf{i}(x^1)$ and $\textbf{i}(x^2)$, and low cosine similarity with all other $\textbf{i}(x^j)$.
More formally, $C$ must solve the following optimization problem:
\begin{equation}
\resizebox{.90\linewidth}{!}{$
\begin{gathered}
     \textbf{i}(x^1, x^2) = \operatorname*{argmax}_{\textbf{i}(x^1, x^2)} \Big[\textbf{i}(x^1, x^2) \cdot \textbf{i}(x^1) + \textbf{i}(x^1, x^2) \cdot \textbf{i}(x^2) \\- \sum_{j=3}^{M} \textbf{i}(x^1, x^2) \cdot \textbf{i}(x^j)\Big]\ \  \text{s.t.}\ \  \left\|\textbf{i}(x^1, x^2)\right\| = 1
\end{gathered}
$}
\end{equation}

Here, the first two terms guide the local placement of $\textbf{i}(x^1, x^2)$, while the last term introduces a global constraint to avoid proximity to other embeddings. The constraint ensures that all embeddings must lie on the unit hypersphere.
We can expand the sum to see that:
\begin{equation}
\begin{split}
     \textbf{i}(x^1, x^2) = \operatorname*{argmax}_{\textbf{i}(x^1, x^2)} \Big[ \textbf{i}(x^1, x^2) \\\cdot \Big(\textbf{i}(x^1)+ \textbf{i}(x^2) + \textbf{i}(x^1)+ \textbf{i}(x^2) - \sum_{j=1}^{M} \textbf{i}(x^j)\Big)\Big]
     \end{split}
\end{equation}


\noindent Since random vectors in high dimensions will be approximately symmetrically distributed,
$\sum_{j=1}^{M} \textbf{i}(x^j) \approx 0$.
The optimum is then reached when $\textbf{i}(x^1, x^2)$ is parallel to $\textbf{i}(x^1)+ \textbf{i}(x^2)$. 
Thus we see $\textbf{i}(x^1, x^2)$ is a normalized superposition of $\textbf{i}(x^1)$ and $\textbf{i}(x^2)$, and lies on the geodesic arc between \( \textbf{i}(x^1) \) and \( \textbf{i}(x^2) \) on the hypersphere, i.e.,
\begin{equation}
\boxed{
\begin{gathered}
\textbf{i}(x^1,x^2) = \frac{  \textbf{i}(x^1) + \textbf{i}(x^2)}{\left\|\textbf{i}(x^1) + \textbf{i}(x^2)\right\|}    
\end{gathered}
}
\label{eq:sup}
\end{equation}



\end{proof}

\begin{lemma}
\textbf{$C$ cannot distinguish between different attribute bindings. That is, $\textbf{i}(x_a, y_b) = \textbf{i}(x_b, y_a)$}.
\label{lemma_attbind}
\end{lemma}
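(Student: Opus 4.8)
The plan is to apply Lemma 1 twice — once to each two-object scene — and then observe that the two resulting superpositions are built from exactly the same single-attribute-bound embeddings. The key conceptual step is to treat an attribute-bound object $x_a$ as itself an ``atomic'' concept in the sense required by Lemma 1: the argument behind Lemma 1 only used that the relevant single-concept embeddings are (approximately) symmetrically distributed on the sphere so that their global sum vanishes, and that the scene embedding should be close to each of its constituents and far from everything else. Attribute-bound concepts $\{x_a : x\in\mathbb{V}, a\in\mathbb{A}\}$ inherit this property under Conditions 1.1–1.2, so Lemma 1 gives, for the scene containing $x$ with attribute $a$ together with $y$ with attribute $b$,
\begin{equation}
\textbf{i}(x_a, y_b) = \frac{\textbf{i}(x_a) + \textbf{i}(y_b)}{\left\|\textbf{i}(x_a) + \textbf{i}(y_b)\right\|}.
\label{eq:xayb}
\end{equation}

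Next I would write the analogous identity for the swapped pairing, again by Lemma 1:
\begin{equation}
\textbf{i}(x_b, y_a) = \frac{\textbf{i}(x_b) + \textbf{i}(y_a)}{\left\|\textbf{i}(x_b) + \textbf{i}(y_a)\right\|}.
\label{eq:xbya}
\end{equation}
The numerator of \eqref{eq:xayb} is $\textbf{i}(x_a)+\textbf{i}(y_b)$ and the numerator of \eqref{eq:xbya} is $\textbf{i}(x_b)+\textbf{i}(y_a)$; as \emph{sets} of summands these differ, so one more ingredient is needed to force equality. I would supply it by invoking Lemma 1 a second time at the level of attributes: the embedding $\textbf{i}(x_a)$ of a single object carrying a single attribute is itself a scene composed of the object concept $x$ and the attribute concept $a$, so Lemma 1 yields $\textbf{i}(x_a) = \bigl(\textbf{i}(x)+\textbf{t}(a)\bigr)/\lVert\textbf{i}(x)+\textbf{t}(a)\rVert$ (treating the visual realization of attribute $a$ as a fixed point of $C$, as in Definition 3), and likewise for $\textbf{i}(x_b), \textbf{i}(y_a), \textbf{i}(y_b)$. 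Up to the positive normalizing scalars, the combined numerator in each case is then a positive combination of $\textbf{i}(x), \textbf{i}(y)$, and the attribute points for $a$ and $b$ — and crucially the \emph{same} four vectors appear in both \eqref{eq:xayb} and \eqref{eq:xbya}. Renormalizing onto $S^{N-1}$ then collapses both expressions to the single unit vector pointing along $\textbf{i}(x)+\textbf{i}(y)+(\text{attribute-}a\text{ point})+(\text{attribute-}b\text{ point})$, giving $\textbf{i}(x_a, y_b) = \textbf{i}(x_b, y_a)$.

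The main obstacle is the bookkeeping of the normalization constants: the four sub-embeddings enter \eqref{eq:xayb} and \eqref{eq:xbya} with different scalar weights $1/\lVert\textbf{i}(x)+\textbf{t}(a)\rVert$ etc., so the two numerators are not literally equal as vectors but only equal after projection to the sphere, and one must check that the \emph{direction} is unchanged. This works because each pair $\{x_a, y_b\}$ and $\{x_b, y_a\}$ contributes a symmetric mix in which the object-weight and attribute-weight totals match once we sum over both objects and both attributes; I would argue this either by the same high-dimensional symmetry/near-orthogonality estimate used in Lemma 1 (so all four normalizers are approximately equal and the weights wash out), or, more cleanly, by noting that the optimization defining $\textbf{i}(x_a,y_b)$ in the style of Lemma 1 is invariant under the swap $a\leftrightarrow b$ since the objective only sees the unordered collection of constituent concepts $\{x,y,a,b\}$ — and invariance of the argmax under that swap is exactly the claim. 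I would present the second, optimization-symmetry version as the primary argument and relegate the normalization estimate to a remark.
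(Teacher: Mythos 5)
Your high-level plan is the same as the paper's (apply Lemma 1 to superpose the two bound objects, then decompose each bound object into an object part and an attribute part and argue the swap leaves the total unchanged), but two of your key steps do not hold up. First, the decomposition $\textbf{i}(x_a) = \bigl(\textbf{i}(x)+\textbf{t}(a)\bigr)/\lVert\textbf{i}(x)+\textbf{t}(a)\rVert$ is not an instance of Lemma 1: attributes have no image representative in this framework (Definition 3 maps $\mathbb{A}$ only into $\mathbb{T}$), and Condition 1.2 forces $\textbf{i}(x_a)$ to stay a \emph{small} perturbation of $\textbf{i}(x)$, not an equal-weight mixture of $\textbf{i}(x)$ and $\textbf{t}(a)$. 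The paper instead writes $\textbf{i}(x_a)=(1-\delta)\textbf{i}(x)+p\,\textbf{t}(a)$ with $\delta\ll 1$, justifies the $\textbf{t}(a)$ direction via Condition 2.2, and derives $p$ from the unit-norm constraint. Second, even granting your decomposition, your claim that both composites are built from ``the same four vectors'' with matching weights is false: with $\alpha=\lVert\textbf{i}(x)+\textbf{t}(a)\rVert^{-1}$ and $\beta=\lVert\textbf{i}(x)+\textbf{t}(b)\rVert^{-1}$, one numerator is $\alpha\textbf{i}(x)+\beta\textbf{i}(y)+\alpha\textbf{t}(a)+\beta\textbf{t}(b)$ while the swapped one is $\beta\textbf{i}(x)+\alpha\textbf{i}(y)+\alpha\textbf{t}(a)+\beta\textbf{t}(b)$; they differ by $(\alpha-\beta)\bigl(\textbf{i}(x)-\textbf{i}(y)\bigr)$, so exact equality requires $\theta=\omega$ or an extra approximation. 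The paper avoids this precisely because its object coefficient $(1-\delta)$ is attribute-independent and its attribute coefficient $p$ depends only on $\theta=\arccos(\textbf{i}(x)\cdot\textbf{t}(a))=\arccos(\textbf{i}(y)\cdot\textbf{t}(a))$, i.e.\ it explicitly restricts to an attribute-agnostic quartet ($\textbf{i}(x)\cdot\textbf{t}(a)=\textbf{i}(y)\cdot\textbf{t}(a)$, and likewise for $b$) — an assumption you never state but which is what makes the weights binding-independent.

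Your preferred fallback, the ``optimization-symmetry'' argument, is circular. The Lemma-1-style objective defining $\textbf{i}(x_a,y_b)$ is built from the constituent embeddings $\textbf{i}(x_a)$ and $\textbf{i}(y_b)$, which a priori depend on the bindings; asserting that the objective ``only sees the unordered collection $\{x,y,a,b\}$'' is exactly the binding-insensitivity the lemma asks you to prove. That insensitivity has to be \emph{derived} from the structure of the single-object embeddings, which is what the paper's computation of object-independent $p$ and $q$ (plus the robustness-to-noise argument in the supplement) accomplishes. To repair your proof, state the attribute-agnostic assumption, replace the equal-weight superposition with the perturbation form of Condition 1.2/2.2, and show the resulting coefficients are independent of which object carries which attribute before invoking Lemma 1 on the pair.
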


\begin{proof}
We start by deriving the ideal location for $\textbf{i}(x_a)$ in $C$ where the distance between this unit vector and $\textbf{i}(x)$ and $\textbf{t}(a)$ is strictly semantically correct per Conditions 1 and 2.1-2.2. Then we derive $\textbf{i}(x_a,y_b)$ and $\textbf{i}(x_b,y_a)$ to meet Condition 1.1, which will contradict Condition 2.3. 

Condition 1.2 states that images of the same object that differ only in attributes (\emph{e.g.,} red car, black car) should be more similar to each other in $C$ than than images of distinct objects. 
This means that the attribute-specific image embedding can be expressed as a small perturbation of the representative image embedding for that object:
\begin{equation}
    \textbf{i}(x_a) = (1-\delta)\textbf{i}(x) + \mathbf{v}
    \label{smallv}
\end{equation}




\noindent where $\delta \ll 1$ is some small positive constant and vector $\mathbf{v}$ denotes the attribute-specific change such that $\left\|\textbf{i}(x_a)\right\| = 1$ and $\textbf{i}(x_a)\cdot \textbf{i}(x) \geq 1-\delta$. 

Now we consider
two objects $x$ and $y$ and two attributes $a$ and $b$ where the objects are agnostic to the attributes. More precisely, we consider $x,y,a,b$ where the semantic distance between each attribute's text embedding and both objects is equal, \emph{i.e., } $\textbf{i}(x)\cdot\textbf{t}(a) = \textbf{i}(y)\cdot\textbf{t}(a) = \cos(\theta)$ and $\textbf{i}(x)\cdot\textbf{t}(b) = \textbf{i}(y)\cdot\textbf{t}(b) = \cos(\omega)$. A concept quartet that satisfies this criterion could be $x$:car, $y$:ball, $a$:red, $b$: black. 

Condition 2.2 states that $\textbf{i}(x_a)\cdot \textbf{t}(a) > \textbf{i}(x)\cdot\textbf{t}(a)$. Therefore, a large component of $\mathbf{v}$ in Eq.~(\ref{smallv}) must be in the direction of $\textbf{t}(a)$.
Below, we will show that if we make the strong assumption that $v = p\textbf{t}(a)$, then $\textbf{i}(x_a, y_b) = \textbf{i}(x_b,y_a)$. Then we show that the result will be robust even if $\mathbf{v} = p\textbf{t}(a) + \boldsymbol{\epsilon}$ for some noise vector $\boldsymbol{\epsilon}$.
Letting $\mathbf{v} = p\textbf{t}(a)$, we write
$\textbf{i}(x_a)$ as a superposition:
\begin{equation}
\boxed{
\textbf{i}(x_a) = (1 - \delta) \textbf{i}(x) + p \textbf{t}(a)
}
\end{equation}

We derive $p$ in Sec E.4 of the Supplements as:
\begin{equation}
\resizebox{.90\linewidth}{!}{$
p = -(1 - \delta) \cos \theta \pm \frac{1}{2}   \sqrt{4(1 - \delta)^2 \cos^2 \theta + 8\delta - 4\delta^2}   
$}
\end{equation}

Notice that $p$ only depends on $\theta$, where $\theta = \arccos(\textbf{i}(x)\cdot \textbf{t}(a)) = \arccos(\textbf{i}(y)\cdot \textbf{t}(a))$.
By the same reasoning as above, for object $y$ with attribute $a$, we have:
\begin{equation}
    \textbf{i}(y_a) = (1 - \delta) \textbf{i}(y) + p \textbf{t}(a)
\end{equation}
Similarly for attribute $b$, $\textbf{i}(x_b)$ and $\textbf{i}(y_b)$ share the same weighting factor $q$ for $\textbf{t}(b)$ as follows:
\begin{equation}
\begin{split}  
\textbf{i}(x_b) = (1 - \delta) \textbf{i}(x) + q \textbf{t}(b)\\
\textbf{i}(y_b) = (1 - \delta) \textbf{i}(y) + q \textbf{t}(b)
\end{split}
\end{equation}
Now consider the composite image embedding $\textbf{i}(x_a, y_b)$. Per Eq.~(\ref{eq:sup}), it is decomposed into a superposition:
\begin{equation}
\boxed{
\begin{aligned}
    \textbf{i}(x_a, y_b) &=\frac{(1 - \delta)( \textbf{i}(x)+\textbf{i}(y)) + p\textbf{t}(a) + q\textbf{t}(b)}{2} \\
    &= \textbf{i}(x_b, y_a)
    \end{aligned}
    }
    \label{attbind_eq}
\end{equation}
This shows that the composite embedding is identical regardless of which object is paired with which attribute. In other words, the distinct attribute bindings become indistinguishable in the final embedding. This equivalence violates Condition 2.3, which requires that different attribute-object bindings produce distinct embeddings.

For cases where $\mathbf{v} = p\textbf{t}(a) + \boldsymbol{\epsilon}$ for some random perturbation vector $\boldsymbol{\epsilon}$, we show in Sec E.3. of the Supplements that the results in Eq.~(\ref{attbind_eq}) are robust to the addition of $\boldsymbol{\epsilon}$.
\end{proof}

\subsection{Contradictions to Other Conditions}

Now we proceed to illustrate how the linear superposition implied by Condition 1 contradicts other conditions. If Condition 1 is fulfilled, then the following impossibilities occur:
\begin{itemize}
    \item \textbf{Condition 3:} Spatial relationships between objects must embed as   
   $\textbf{i}(x, g^{<rel>}, y) = 
   (1-\delta)\,\textbf{i}(x,y) + \textbf{e}_{\perp}$ per Condition 1.2.
   We show that for a simple case of 3 images, where the same localization is present in image 1 and 2 and the same relationship is present in image 1 and 3, we encounter
   $\textbf{e}_{\perp,1} \cdot \textbf{e}_{\perp,2} = - \textbf{e}_{\perp,1} \cdot \textbf{e}_{\perp,3}$. This means Conditions 3.1 and 3.2 cannot be simultaneously satisfied.
    
    \item \textbf{Condition 4:} 
    We find that satisfaction of Conditions 4.1 and 2 in $C$ necessitates 
    $\textbf{t}(\neg x) = -\textbf{t}(x)$. This produces the following effect: $\textbf{t}(\neg x^j)\cdot \textbf{t}(x^k) > \textbf{t}(\neg x^j) \cdot \textbf{t}(\neg x^k)$, violating Condition 4.3.
    \end{itemize}
    
\noindent Please refer to the Supplements for detailed proofs.





\section{Rescuing the CLIP Latent Space}


Is CLIP beyond rescue, or can its learned embeddings be improved? Although its latent space lacks compositional expressivity, its ease of use and powerful image-text organization are undeniable. Hence, we investigate methods to make use of the rich information learned by CLIP for a more principled evaluation of the text to image semantic distance.

Our findings indicate that CLIP is structurally flawed. Re-training, fine-tuning, or simply re-projecting CLIP embeddings will still yield a latent space that lacks the desired properties. Similarly, applying an alternative analytical or learned scoring mechanism on top of the learned embeddings cannot work if two distinct images and texts embed to the same location in $C$ per Lemma \ref{lemma_attbind}. 

\textbf{\textit{Thus: any fix must alter the fact that CLIP represents  images and texts as unit vectors.}} Rather than designing an approach that imposes re-training a full model, we explore an extension to CLIP that is based on the existing CLIP encoders. More specifically: we explore a solution based on three ideas. First, retain the full token and image patch embeddings from CLIP. Second, score matches through a learned mechanism, rather than using cosine similarity. Third, introduce constant, rather than learned, representations of spatial relationship words. 
These ideas are discussed in detail in the next section.

\subsection{Dense Cosine Similarity Maps}
\label{modeltrain}
\begin{figure}[b]
\vspace{-7mm}
    \centering{\includegraphics[width=0.99\linewidth]{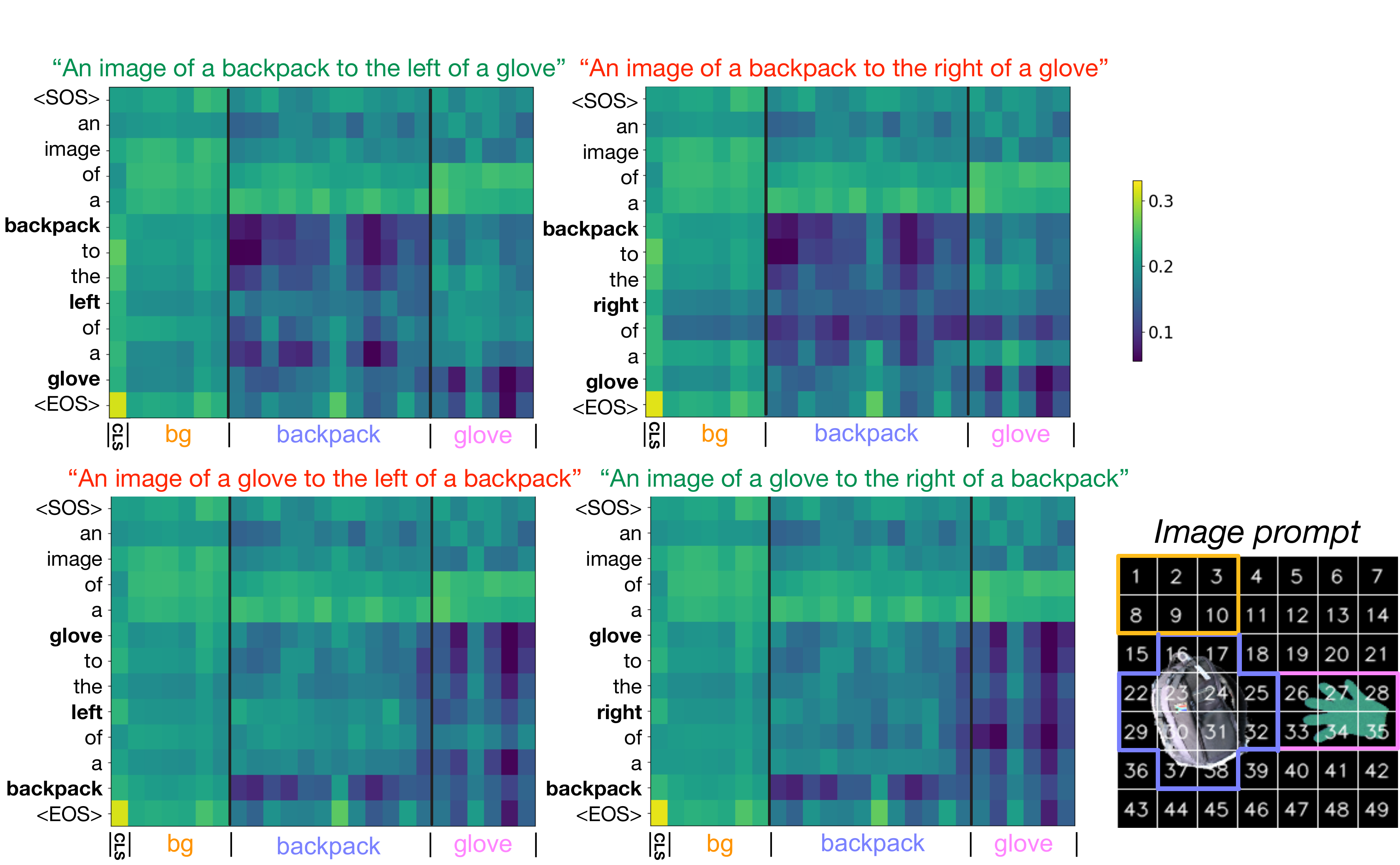}}
    \vspace{-2mm}
    \caption{{\bf Empirical Dense Cosine Similarity Maps.}
    Each one of four matrices shows a DCSM between a different sentence and the same pictured image. For each subfigure: the y axis shows the text tokens and the x axis varies by image patch. We cluster the patches by region as shown in the image.
    Each pixel value is the cosine similarity score between that token and patch embedding. 
    Green sentences correctly represent the image and red ones do not.
    }
    \label{fig:densemap-empirical}
\end{figure}

\begin{figure*}[t]
    \centering
\includegraphics[width=0.95\linewidth]{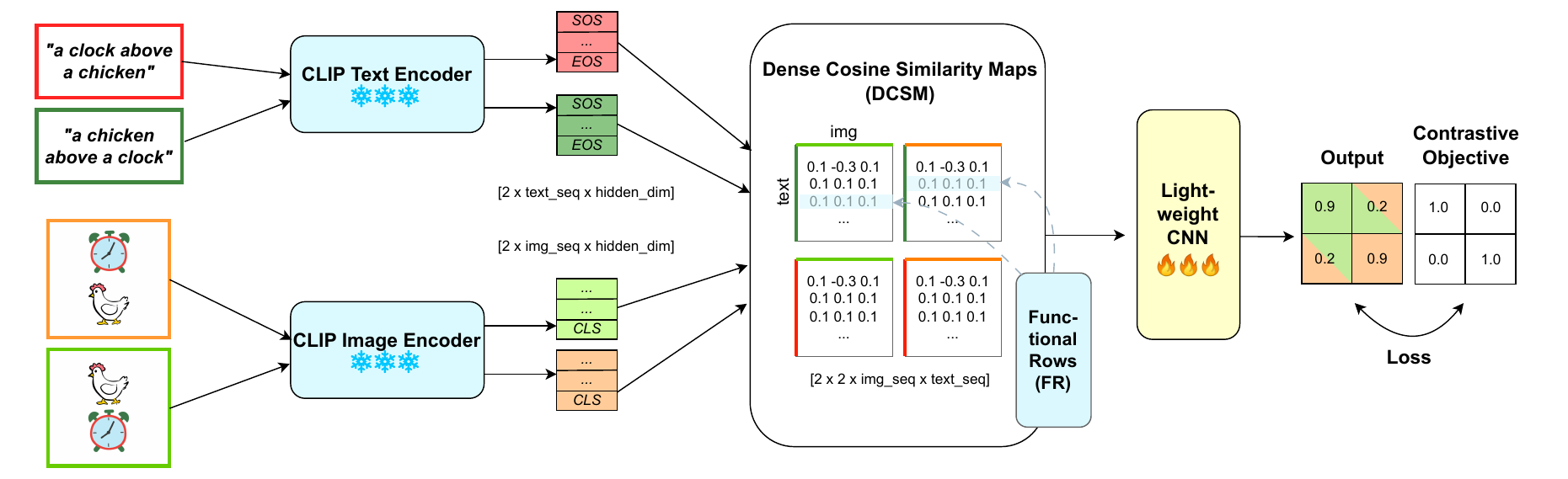}
    \vspace{-2mm}
    \caption{{\bf Schematics of our proposed pipeline and training of its scoring function.} Every sample seen during training contains one hard positive caption and image pair, and a hard negative caption and image pair. Images and texts are passed through frozen CLIP encoders to compute the DCSMs, and then functional rows (FRs) for compositional words are inserted before the DCSMs are scored.}
    \label{fig:full_model}
    \vspace{-4mm}
\end{figure*}

Instead of deriving a single-point CLIP embedding from EOS token from the text embeddings and the CLS token from the image embeddings, we propose to compute the pairwise cosine similarity between all text tokens and all image patches, and then to use a learned scoring mechanism on the resulting dense cosine similarity map (DCSM). The intuition is that transformers~\cite{vaswani2017attention}, in learning the correct CLS and EOS token pair, store useful information in the token and patch level embeddings. While dense image patches have been used in tasks like image segmentation or dense label generation ~\cite{zhou_extract_2022, denseclip_rao_2021}, our approach is the first to extract embeddings at both the token and patch level, and densely compute the cosine similarity across the two representations to formulate the task of image-text pair scoring as a pattern recognition problem.

\begin{figure}[b]
  \vspace{-6mm}
\centering{\includegraphics[width=0.9\linewidth]{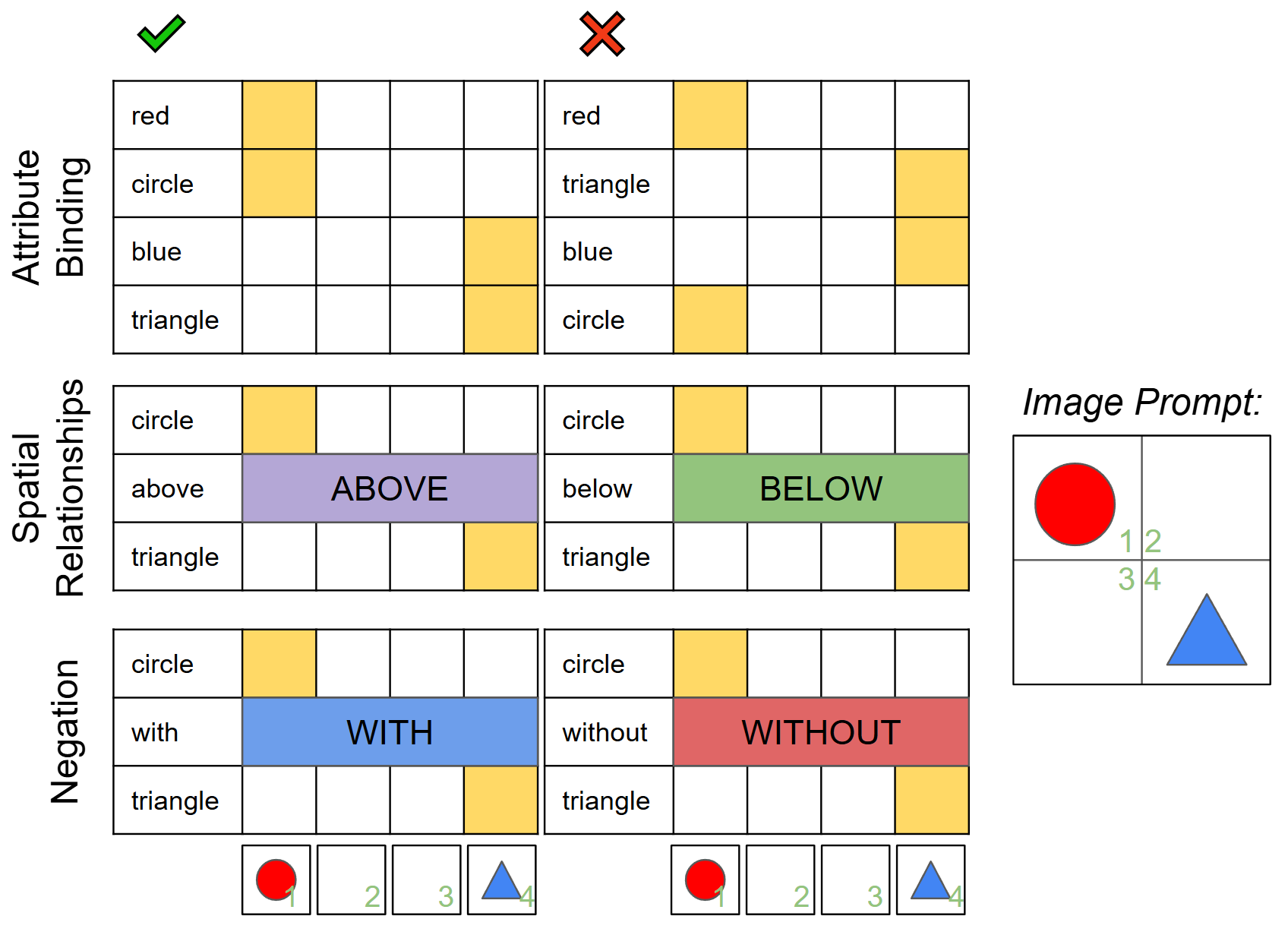}}
    \vspace{-2mm}
    \caption{\textbf{Graphic illustration of retained topology in DCSMs.} The image prompt has 4 patches, and each DCSM shows the dense cosine similarity between each of its patches and the text tokens of the text prompts. Sentences under the green checkmark are a correct pair with the image, while those with a red X are incorrect.
    }
    \label{fig:densemap-graphic}
\end{figure}
Fig.~\ref{fig:densemap-empirical} depicts dense DCSMs for four distinct sentences and one image prompt. The columns have been rearranged to cluster background (with CLS patch being the first column), backpack, and glove patches. DCSMs clearly encode object and attribute locality. As expected from the contrastive training procedure, the highest score in each DCSM is between the EOS token and CLS patch embedding. Interestingly, text tokens describing a specific object have low cosine similarity scores with the patches containing that object. Relatedly, some studies have observed that CLIP attention maps appear inverted, and that post-hoc architectural "surgery" can produce more interpretable saliency maps \cite{li2025closer, bousselham2024grounding}. 
As our goal is to maximally make use of unaltered CLIP features, we do not perform these surgeries and rely on the scoring tool to recognize inverted patterns. Further analysis in this direction will require evaluating the impact of extracted DCSMs from edited CLIP models. 




\noindent\textbf{Functional Rows (FR).} We observe that for both correct and incorrect sentence-image pairs, the DCSMs often look very similar. Much like background patches in an image, text embeddings for compositional concepts, which we will refer to as \textit{functional words} (\emph{e.g.,} ``left of",``right of") that lack immediate visual counterparts only contribute spurious information to the DCSM as the joint encoder training emphasizes words with direct visual references.



This motivates our proposed \textit{functional rows} for DCSMs.
Specifically, we discard the original rows containing functional words in DCSMs, and replace them with a constant vector with randomly chosen entries.




\begin{table*}[t]
\centering
 \resizebox{0.99\linewidth}{!}{
\begin{tabular}{l|ccc|ccc|cc}
        \toprule
        \multicolumn{1}{c}{} & \multicolumn{3}{c}{Attribute Binding} & \multicolumn{3}{c}{Spatial Reasoning} & \multicolumn{2}{c}{Negation} \\
        \hline
               \textbf{Model Name} &  \textbf{CLEVR$_{bind}$} & \textbf{NCD} &  \textbf{VG\_attr} &  \textbf{WhatsUp} &  \textbf{COCO$_{1\&2obj}$} &  \textbf{VG$_{1\&2obj}$} &  \textbf{NegBench$_{coco}$} &  \textbf{NB$_{voc}$}\\
\midrule
    CLIP$_{ViTB/32}$ &        22.2 &            71.3 &     61.3 &     31.9 &         47.0 &       47.1 &      39.2 &     38.3 \\
    CLIP$_{ViTB/16}$ &        20.2 &            63.1 &     61.8 &     30.5 &         48.9 &       51.5 &      41.5 &     37.9 \\
                  NegCLIP &        21.8 &            79.2 &     \textbf{72.4} &     33.2 &         46.1 &       47.2 &      31.4 &     26.5 \\
                     CoCa &        19.2 &            48.7 &     50.8 &     24.5 &         48.6 &       49.5 &      21.6 &     20.8 \\
                     BLIP &        11.1 &            56.2 &     59.4 &     24.4 &         48.5 &       50.4 &      20.5 &     20.7 \\
                   SigLIP &        13.3 &            53.6 &     48.4 &     26.0 &         47.4 &       51.1 &      26.3 &     29.7 \\
\rowcolor{blue!5}\textbf{DCSM$_{synth}$} &        31.0 &            93.6 &     60.9 &     62.6 &         65.6 &       64.4 &      38.8 &     35.2 \\
\rowcolor{blue!5}\textbf{DCSM$_{coco}$} &        \textbf{39.9} &           \textbf{95.7} &     68.1 &   \textbf{ 63.7} &         \textbf{72.4} &      \textbf{67.0} &     \textbf{48.6} &     \textbf{49.0} \\ \midrule
            Random Chance &        25.0 &            50.0 &     50.0 &     25.0 &         50.0 &       50.0 &      25.0 &     25.0 \\
\bottomrule
\end{tabular}
}
    \vspace{-3mm}
    \caption{\textbf{Accuracy comparison across different models on various benchmarks}. The top row categorizes each benchmark into the condition being addressed. Our DCSM takes CLIP ViTB/16 as the base model. Best scores for each dataset are bolded. Evaluation datasets contain scene compositions and attribute, spatial, and negation words not included in the training set for out model.
    } 
    \label{tab:model_comparison}
    \vspace{-4mm}
\end{table*}

\noindent\textbf{Model Pipeline} (see Fig.~\ref{fig:full_model})\textbf{.} Unlike classical CLIP scoring which discards all but the EOS and CLS tokens to produce a single scalar, our proposed method retains all token embeddings and projects them into a joint latent space to compute the DCSM. For all experiments, we use a lightweight CNN with 2 convolutional layers and a hidden dimension of 128, yielding a 20-fold reduction in parameters. 

\noindent\textbf{Training.} The network is trained with a batch size of 8, which is 4000 times smaller than CLIP's original 32,768. Our total training data per model is around 20,000 samples,  which is actually 1.5 times smaller than the mini-batch size of CLIP. 
For a small set of functional words (elements of $f_{G,T}(g)$ for all $g \in \mathbb{G}$), we keep a dictionary of unique fixed FRs and overwrite the corresponding rows in the DCSM. As illustrated in Fig.~\ref{fig:densemap-graphic}, in an idealized setting, DCSMs with FRs allow for clear disambiguation between correct and incorrect image-text pairs, reducing the task of parsing visuolinguistic semantics to  pattern recognition. Notably, our model does \textit{not} see any input image or text at all; it simply learns to recognize the syntactic patterns from the DCSMs.

We train one model on synthetic data created from Objaverse \cite{deitke_objaverse_2022} renders and another on a subset of COCO2017. Additional training details are provided in the Supplements.





\subsection{Performance Evaluation}

We compare our method against a range of models which utilize a joint vision-language embedding space.
Namely: CLIP from OpenAI \cite{radford_learning_2021} and OpenCLIP \cite{cherti2023reproducible}, NegCLIP \cite{yuksekgonul_when_2023} which is a finetuned version of OpenCLIP with mined hard negatives, Coca \cite{yu2022coca} which is an image-text encoder-decoder model trained with contrastive and captioning loss, SigLIP \cite{zhai_sigmoid_2023} which uses a sigmoid loss in lieu of cross entropy, and BLIP \cite{BLIP_li} which bootstraps noisy web scraped data with synthetic, filtered captions. 
The models are evaluated on datasets of three categories: CLEVR-bind \cite{lewis_does_2024}, a composite form of Natural Colors Dataset (NCD) \cite{anwar_image_2024} per \cite{yamada_when_2024}, and VG\_attribution from Attribute, Relation, Order (ARO) \cite{yuksekgonul_when_2023} for attribute binding; WhatsUp, COCO-QA, and VG-QA \cite{kamath_whats_2023} for spatial relationships and localization; and NegBench \cite{alhamoud_vision-language_2025} for negation. Table \ref{tab:model_comparison} shows the results.

Our method significantly outperforms CLIP-like baselines, likely due to the increased dimensionality of DCSMs, which replace scalar cosine similarity with a dense topological map. Specifically, patch indices on the image axis store spatial information, while the token indices on the text axis preserve the semantic ordering. DCSMs trained on COCO outperform the baselines across all datasets except on VG-attribution \cite{yuksekgonul_when_2023}, where NegCLIP benefits from similar hard negatives. Improvements on NegBench are lower, likely due to the data gap between our templated training captions and the natural language captions. 

Notably, our model generalizes well to unseen attribute, spatial, and negation concepts despite the limited training set. In particular, our model is only trained on templated two-object captions but works well for single object captions on CLEVR-bind and subsets of COCO and VG-spatial. This suggests the downstream network learns syntactic patterns rather than overfitting to the training templates. Beyond performance gains, DCSMs offer greater interpretability. Unlike naive CLIP embeddings whose actual values can be arbitrary, DCSMs are human interpretable (see Fig.~\ref{fig:densemap-empirical}), which makes downstream usage more intuitive.

For detailed ablation studies on the impact of dense maps, FRs, classification performance, and scaling analysis, please refer to Sec. C of the Supplements.


\subsection{Generalization to Natural Language}

To test the generalizability of DCSMs to open vocabulary settings, we incorporate LLMs-in-the-loop to dynamically update the lookup table for FRs and reformat natural language sentences for more compact DCSMs. 
We explore this in a toy setting by training our lightweight CNN from scratch with either 5k or 10k COCO images whose captions have had their nouns swapped. 
We choose this particular type of intervention as CLIP-like VLM performance across the board was lowest for this category of hard negatives in Sugarcrepe \cite{hsieh_sugarcrepe_2023}.
On top of the training pipeline described in Sec.~\ref{modeltrain}, we prompt gpt-4o-mini to extract newly encountered functional words in the training data. We evaluate this paradigm on the swap-object split of Sugarcrepe and the VG-spatial split of VL-Checklist~\cite{zhao_vl-checklist_2023}, against naive and finetuned CLIP. Details for training and prompting are in the Supplements.
Results are shown in Table \ref{tab:llm_evals}.

\begin{table}
    \centering
    \begin{tabular}{l|c c}
        \toprule
         Model & VLC$_{vg}$ $_{spatial}$ & SC$_{swap}$ $_{obj}$ \\
         \hline 
    CLIP$_{ViTB/16}$& 50.8& 60.2 \\
    CLIP$_{ft. synth}$& 56.9 & 50.0 \\
    \rowcolor{blue!5}\textbf{DCSM}$_{synth}$& 50.6&  59.8 \\
    \rowcolor{blue!5}\textbf{DCSM}$_{open}$ $_{vocab}$& \textbf{63.5} & \textbf{63.8} \\
         \bottomrule
    \end{tabular}
    \vspace{-3mm}
    \caption{\textbf{Open Vocab Accuracy}. Sugarcrepe results with LLM prompt simplification are averaged across 6 trials. VL-Checklist captions were not simplified at test time. We report the higher score of the two trained models 
    for open-vocab DCSM. 
    }
    \label{tab:llm_evals}
    \vspace{-6mm}
\end{table}

VL-checklist captions are short but contain novel functional words not present in the static lookup table. 
Sugarcrepe captions are longer, with complex syntactic arrangements as well as novel functional words. 
We see that the finetuned CLIP model overfits to simple clauses and fails at the more complex setting, while the closed vocabulary DCSM performance is consistent with that of naive CLIP. The open vocabulary DCSM shows a modest performance increase in both settings.
This suggests that LLM for dynamic FR update is a promising avenue, though long winded prompts are still a bottleneck for the lightweight CNN model. 
We predict that increasing the dataset size and quality, as well as improving scoring model complexity will help refine natural language performance.

\section{Conclusions}
Through formal logical analyses, we show that inherent geometric limitations in CLIP prevent the correct representation of image content, attribute binding, spatial localization, and negation.
To address this problem we tap the information from CLIP's text and image encoders, creating a richer latent space with Dense Cosine Similarity Maps and Functional Rows. We evaluate our simple solution against a wide array of benchmarks and find that it surpasses SoTA performance. Future work includes scaling up our prototyped pipeline to expand on its open vocabulary potential with more sophisticated CLIP base models, scoring models, LLMs, and training data. Further analyses to discover other fundamental properties of latent geometries and to explore alternative embedding manifolds beyond unit spheres will inform the design of next‐generation VLM architectures.

\section*{Acknowledgments}
We would like to thank Yisong Yue, Damiano Marsili, Laure Delisle, Neehar Kondapaneni, Sevan Brodjian, Anna Ding, and Meir Yossef Levi, for valuable discussions and proofreading. Raphi Kang is supported by the NSF Graduate Research Fellowship. Yue Song is supported in part by gifts from Cisco and OpenAI. Georgia Gkioxari is supported by the Hurt Scholar program, Meta and Google.

{
    \small
    \bibliographystyle{ieeenat_fullname}
    \bibliography{main}
}

\newpage
\appendix
\startcontents[appendices]
\printcontents[appendices]{l}{1}{\section*{Is CLIP ideal? No. Can we fix it? Yes!\\--Supplementary Material--}}

\section{Extended Related Work}

\noindent\textbf{Vision Language Models.} Recent advancements in VLMs have significantly bridged the gap between vision and language. A seminal work in this area is CLIP~\cite{radford_learning_2021}, which demonstrated that large-scale contrastive learning can effectively align image and text representations into a shared embedding space. 
Building upon its core principles of large-scale contrastive pretraining and joint representation learning, several subsequent works have explored alternative architectures and training paradigms, including ALIGN~\cite{jia2021scaling}, FILIP~\cite{yaofilip2022}, SLIP~\cite{mu2022slip}, ALBEF~\cite{li2021align}, and CoCa~\cite{yu2022coca}, to name a few. 
Autoregressive VLMs ~\cite{liu2024improved,liu2024visual} have emerged as compelling alternatives to CLIP by jointly attending to both text and image embeddings.
Neurosymbolic program synthesis methods like ViperGPT \cite{surís2023vipergptvisualinferencepython}, VisProg \cite{gupta2022visualprogrammingcompositionalvisual}, and VADAR \cite{marsili2025visualagenticaispatial} also mitigate some of CLIP's limitations by formulating the text-image semantic distance acquisition as several subproblems. While these methods are more comprehensive than CLIP and can specialize in complex visual reasoning, they are orders of magnitude more expensive to infer and do not offer the same simplicity or gradient retention as CLIP, limiting their downstream applicability. In fact, most of these models rely on a CLIP-like model's latent space as a submodule. As such there remains a strong motivation to continue refining and extending CLIP-like architectures by addressing their inherent shortcomings.

\noindent\textbf{Empirical Limitations of CLIP.} A growing body of work has revealed several limitations of CLIP in handling complex visual-text interactions. One major issue is its difficulty in distinguishing between different attribute-concept bindings in multi-object scenes~\cite{lewis_does_2024, newman_pre-trained_2024, campbell_understanding_2024}. For example, the text prompt ``A purple sphere" will have very high cosine similarity with an image that contains a purple cube and a yellow sphere, despite the yellow attribute not belonging to the spherical object. Lewis~\emph{et al.} propose CLEVR-bind as a simple benchmark which isolates the attribute binding capability of VLMs. More comprehensive natural-language benchmarks targeting attribute binding include Attribute, Relation, and Order (ARO), Sugarcrepe~\cite{hsieh_sugarcrepe_2023}, VL-checklist~\cite{zhao_vl-checklist_2023}, and Multimodal Visual Patterns (MMVP)~\cite{tong2024eyes}. Additional studies have noted that CLIP’s text embeddings often behave like ``bag-of-words" in practice, leading to imsinterpretations of object layouts or conflates multiple entities within a single scene~\cite{yuksekgonul_when_2023, lewis_does_2024,newman_pre-trained_2024}. 
Yuksekgonul~\emph{et. al} specifically propose WhatsUP as a benchmark which isolates spatial reasoning capacities of VLMs. In addition, the suite of compositional understanding benchmarks (ARO, VL-checklist, Sugarcrepe, MMVP) also include captions that require spatial reasoning. Another notable failure mode is CLIP's inability to accurately represent negation~\cite{alhamoud_vision-language_2025,singh_learn_2024}. In response, Alhamoud~\emph{et al.} develop NegBench to specifically assess how well VLMs handle various forms of negatory sentences. The aforementioned compositional benchmarks further challenge models with captions that require proper negation understanding.

Some other criticisms of CLIP are its inability to generalize to different reference frames \cite{zhang_vision-language_2024} or to count \cite{paiss_teaching_2023}. While these issues represent additional challenges for CLIP, they are beyond the scope of our current work. 

\noindent\textbf{Proposed Solutions to CLIP Limitations.} The most prominent method of corrections have been to change the training data distribution, such as retraining or fine-tuning CLIP with hard negative or positive examples~\cite{yuksekgonul_when_2023,wang_boosting_2024,singh_learn_2024,alhamoud_vision-language_2025,paiss_teaching_2023,radenovic_filtering_2023,koohpayegani_genie_2024} or increasing the training data for more comprehensive and longer captions~\cite{wang_advancing_2024,li_laion-sg_2024}.
However, simply scaling the training data to mitigate specific problems will often lead to reduced generalization~\cite{zang_overcoming_2024,ma_understanding_2023,hsieh_sugarcrepe_2023}. 

Some engineering solutions include using object detectors to segment images into smaller ROIs~\cite{li_covlm_2023}, adding attribution tracing for correct text-image pairs during training~\cite{li_interpretable_2024}, patch clustering for semantic segmentation~\cite{shao_explore_2024}, using chain of thought spatial reasoning~\cite{chen_spatialvlm_2024} or altering the self attention mechanism in the vision encoder~\cite{wang_sclip_2023}. Simpler solutions may be querying CLIP with multiple descriptions \cite{menon_visual_2022}. All of these methods require retraining CLIP from scratch, or typically adding a heavy-handed component to single out the objects in a scene. 

\section{Experimental Details}

\subsection{Implementation Details}

By convention the rows of the dense map correspond to one text token embedding, and the columns to one image patch. Every text and image pair creates a DCSM of shape (30,197), where 30 is the maximum number of text tokens and 197 is the number of 16x16 image patches in an image of shape 224x224. Text prompts shorter than 30 tokens are padded with EOS tokens. FRs are therefore of the shape (num-image-patches, 1).
For example, for the sentence ``An image of a circle above a triangle", the word \textit{above} is a functional word and the corresponding row in the DCSM gets replaced with the respective FR in the lookup table. For synonymous functional words, we use a single FR. (Somewhat unusually, we consider ``front","below", and ``behind","above" to be synonymous, as DCSMs use a 2D fixed frame of reference due to the topology being represented by the patch index.) The DCSMs are z-score normalized for stable training.

For all training and experiments, we use a lightweight CNN with 2 convolutional layers and a hidden dimension of 128.
In addition to a 20-fold reduction in parameters, we train our network with a batch size of 8, a 4000-fold decrease from the original mini-batch size of 32,768. In fact, the sum of all our training data per model is smaller than this number. 

Our model outputs a single score for each image and text pair DCSM. During training, we use a contrastive cross-entropy loss as with the original CLIP \cite{radford_learning_2021-1}. We train with the Adam optimizer with learning rate initialized at $1e-3$. One model is trained with a curated synthetic dataset and another with COCO 2017 training split. Dataset curation is detailed below.

\subsection{Datasets}
We train our pipeline on two different datasets - one synthetic dataset composed of open-source 3D assets from Objaverse \cite{deitke_objaverse_2022} placed upon randomized backgrounds, and another generated from COCO-train-2017 \cite{coco_lin_2014}.

We are mainly interested in labeling images with text prompts that lie in the Condition 2,3,4 category. That is, for both the synthetic case and the COCO-train case, we generate a dataset for attribute binding, spatial relationships/localization, and negation. Samples from each dataset are shown in Fig.~\ref{fig:dataset}.

\noindent \textbf{Attribute Binding Dataset} \quad 
Every image in this dataset includes two distinct objects of unique colors or sizes. For each image, we generate a ``hard negative". So if there is an image with a ``red cow and purple ghost", we also generate an image with ``red ghost and purple cow". This means that every sample has a positive and negative image, and two positive and negative captions each. The positive image contains object $A$ with attribute $A_{att}$, and object $B$ with attribute $B_{att}$. The negative image contains the same objects but with swapped attributes. The positive caption options are: (P1) “$A_{att}$ $A$ and $B_{att}$ $B$” and (P2) “$B_{att}$ $B$ and $A_{att}$ $A$”. The negative caption options are: (N1) “$A_{att}$ $B$ and $B_{att}$ $A$” and (N2) “$B_{att}$ $A$ and $A_{att}$ $B$”.
 We generate 5,402 images for this dataset, which makes 2701 samples with associated opposites.

For the COCO-train set, we use a natural language processing library to extract adjective-noun pairs in the natural language captions, and select images that have at least two distinct objects $A,B$ with distinct attributes $A_{att}, B_{att}$. Captions follow the same format as above. We select 8,547 images from COCO-train towards this dataset.

\noindent \textbf{Spatial Relationships and Localization} \quad Similarly as above, we generate synthetic images where one object is placed either above, below, to the left, or to the right of, another object with random jitter. For every positive image where $A$ is $rel$ to $B$, there is a negative image where $A$ is $rel_{opp}$ to $A$. Here, (above, below) are opposite relation pairs, as are (left of, right of). 
The positive caption options are: (P1) “$A$ $rel$ $B$” and (P2) “$B$ $rel_{opp}$ $A$”. The negative caption options are: (N1) “$A$ $rel_{opp}$ $B$” and (N2) “$B$ $rel$ $A$”. We generate 11,324 images for this dataset, which makes 5662 samples with associated opposites.

For the COCO images, we choose images where at least two distinct objects are present, and use the relationship between their bounding boxes to validate that they satisfy the definition of one of the spatial relationships being considered. Note that, as we are using patch location on the image to preserve topology, we use a fixed frame of reference to determine the meaning of ``above", ``below", ``left", and ``right". To make a negative version of the image, we use the CutMix technique \cite{cutmix-yun} to swap the image content in the two bounding boxes.
With the positive images from COCO-train and generated hard negatives, there are 11,502 images in this dataset, which makes 5751 samples with associated opposites.

\begin{figure}
    \centering
    \includegraphics[width=0.99\linewidth]{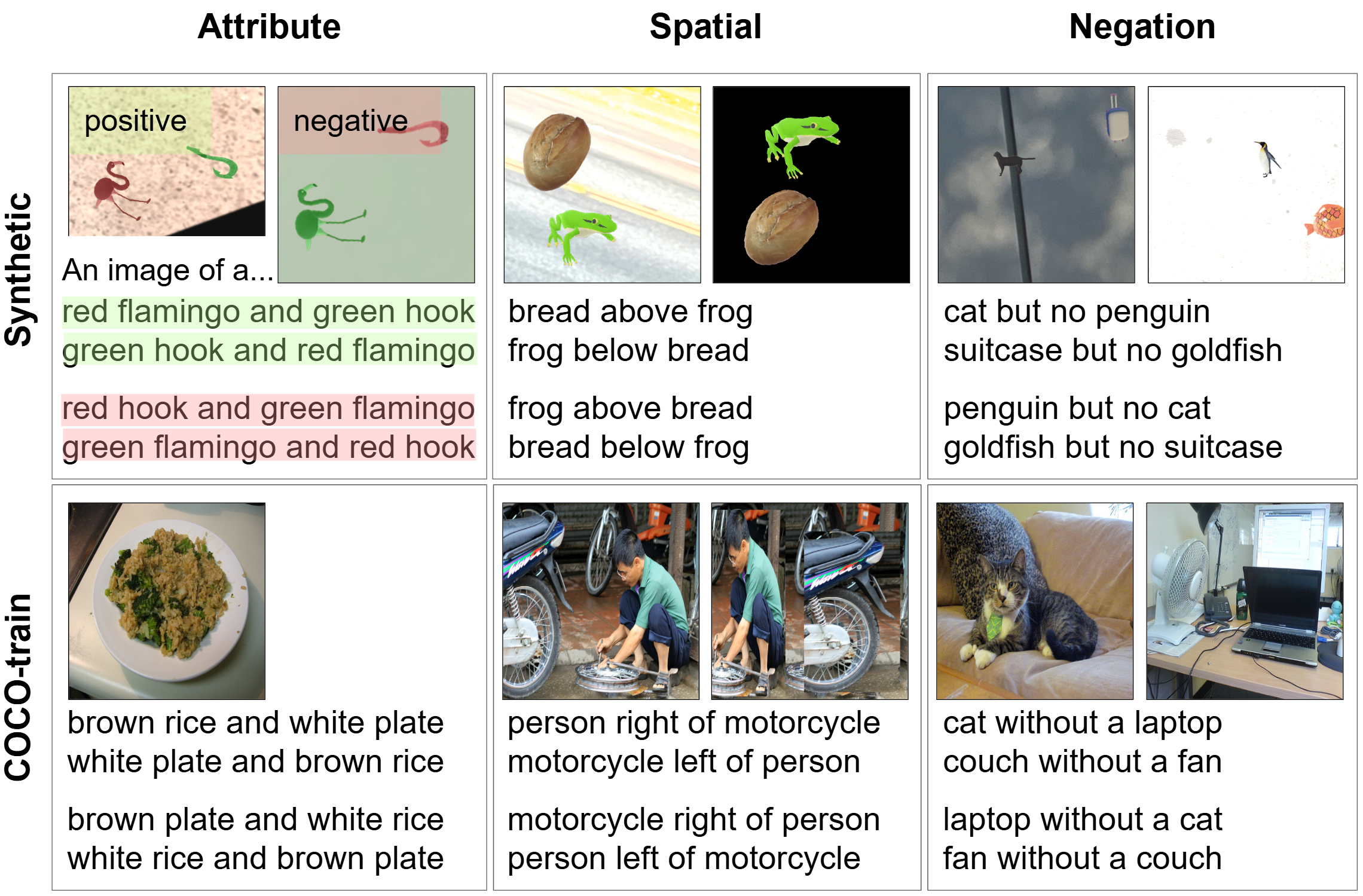}
    \caption{Overview of Dataset Curation. Each box is a dataset. The left image is a positive pair with the top two captions, and the right image is a positive pair with the bottom two captions. The COCO-train Attribute set does not have a hard negative image counterpart.}
    \label{fig:dataset}
\end{figure}

\noindent \textbf{Negaton} \quad Generating negatory captions is tricky.
With contrastive training, an entirely negatory caption (e.g., ``An image without a turtle") necessitates that all other images in the batch be a positive sample (e.g., they must now all contain a turtle). 

As such, for the first pass for the models trained with synthetic data, we generate images with two random objects $A1$ and $A2$, and choose as its hard negative another image which contains two non-overlapping objects, $B1$ and $B2$. For the negatory term $\neg$, we choose between ['but not', 'and no', 'without'].
The positive caption options are: (P1) “$A1 \neg B1$” and (P2) “$A2 \neg B2$”. The negative caption options are: (N1) “$B1 \neg A1$” and (N2) “$B2 \neg A2$”.
We generate 10,000 images for this dataset.

For COCO-train we select two images that have two distinct object labels, and generate captions the same way. We select 10,000 images toward this dataset.


\subsection{Pseudocode}

Below we illustrate the process for extracting DCSMs.






\begin{figure}[htbp]
    \centering
\begin{lstlisting}[language=Python]
dense_image_features = image_encoder(I).last_hidden_state.unsqueeze(1)
# Shape: (batch_size, 1, iseq, embed_dim)

dense_text_features = text_encoder(T).last_hidden_state.unsqueeze(0)
# Shape: (1, batch_size, tseq, embed_dim)

dcsm = einsum( "bqie, lpte -> bpit", dense_image_features, dense_text_features )
# Shape: (batch_size, batch_size, iseq, tseq)

dcsm = add_functional_rows(dcsm, lookup_list)

out = lightweight_cnn(dcsm)
# Shape: batch_size, batch_size

labels = eye(out.shape[0])
# Shape: batch_size, batch_size

loss = CE(out, labels) + CE(out.t(), labels.t())
\end{lstlisting}
    \caption{Pytorch-like code for training our model with DCSMs}
    \label{fig:code}
\end{figure}

\section{Ablation Studies}

\subsection{Impact of DCSMs}

\begin{table}[h]
    \centering
     \resizebox{0.99\linewidth}{!}{
    \begin{tabular}{l|c c}
    \toprule
\textbf{Model} &\textbf{WhatsUP} &	\textbf{COCO$_{2 obj}$}\\
\hline
\rowcolor{blue!5}\textbf{Ours - DCSM (CNN})&	\textbf{62.6}&	\textbf{70.9}\\
\rowcolor{blue!5}\textbf{Ours - DCSM (CNN)$_{ w/o 
 FR}$}	& 48.6 &	55.5\\
\rowcolor{blue!5}\textbf{Ours - DCSM (ViT)}	&29.4&	47.0\\
CLIP - ViTB/16&	30.5	&45.9\\
CLIP - ViTB/16$_{\textit{f.t. synth}}$	&25.5&	49.1\\
CLIP - ViTB/16$_{\textit{MLP scorer}}$	& 25.4 &	53.6\\
\bottomrule
    \end{tabular}
    }
    \caption{The DCSM networks were trained with synthetic data.}
    \label{tab:dcsm_synthetic}
\end{table}

In Sec 5. we said that it follows from our analysis that neither a fine-tuned/reprojected CLIP embedding space, nor a learned scoring module, could alone be the fix to CLIP's fundamental shortcomings. To verify this conclusion, we perform a series of ablations. First, we finetune OpenAI's CLIP (ViTB/16) with the same synthetic dataset used for our model training. We also train a MLP scoring module which takes concatenated text and image embeddings as input to output a score, again on the same synthetic dataset. Both attempts fail to improve performance on WhatsUP and COCO-spatial, resulting in accuracies near chance (25\% and 50\%, respectively). 

Further, we alter our training pipeline in two different ways to assess the need for a CNN as well as the FRs. Removal of the FRs decreases performance overall, but the increased information capacity from using the DCSMs and the downstream network still allows the model to perform above fine-tuned CLIP models. 
Replacement of the CNN with a comparably small ViT, with a patch size of 2 and 2 layers of 4 attention heads, resulted in another near-chance performance on the datasets. The ViT appears more prone to overfitting to the training set, as it does not have the imposed constraint of pattern-recognizing kernels as in CNNs.

All networks were trained with learning rate 1e-3 with Adam for 27 epochs. Under minimal compute, the CNN generalized much better. 
Fine-tuning CLIP projection layers with the same dataset for the same number of epochs did not result in any noticeable performance increase.

\subsection{Classification Performance}

A known problem with VLMs using CLIP embeddings is the decline in classification capacity \cite{zhang_why_2024}. We evaluate our model on two classification datasets and find that, despite being trained with simple prompts and no image classification captions, the reduction in classification performance is minimal compared to those observed in BLIP or LLaVA.

\begin{table}[h]
    \centering
 \begin{tabular}{ l|c c }
 \toprule
\textbf{Model} & \textbf{Caltech101} & \textbf{Flowers102}\\
 \hline
 CLIP-ViT B/16 \cite{radford_learning_2021}  & \textbf{82.6}    & \textbf{67.7} \\
 \rowcolor{blue!5}\textbf{Ours - DCSM + synth}   & \textbf{77.8}  & \textbf{52.9} \\
 \rowcolor{blue!5}\textbf{Ours - DCSM + coco }  & \textbf{79.2}   & \textbf{43.7} \\
 BLIP2-2.7B  \cite{BLIP_li}  & 22.3   & 14.2 \\
 IBLIP-7B \cite{dai2023instructblipgeneralpurposevisionlanguagemodels} \ & 58.4    & 26.8 \\
 LLaVA1.5-7B \cite{liu2024visual}  & 62.1    & 10.2 \\
 \bottomrule
\end{tabular}
    \caption{Classification Accuracy of VLMs. Top three scores per dataset are bolded.}
    \label{tab:classification_vlm}
\end{table}
\noindent

\subsection{Scaling Analysis}
In this work, we showcase a very small and computationally light network and training pipeline for the DCSM method. To verify that this method will scale with increasing data, we perform a scaling analysis. Fig.~\ref{fig:scale} shows the results. The x axis is the approximate number of samples from the curated COCO2017 training set. From this we see that our training pipeline is likely to scale with increasing the dataset.

\begin{figure}[h]
    \centering
    \includegraphics[width=0.85\linewidth]{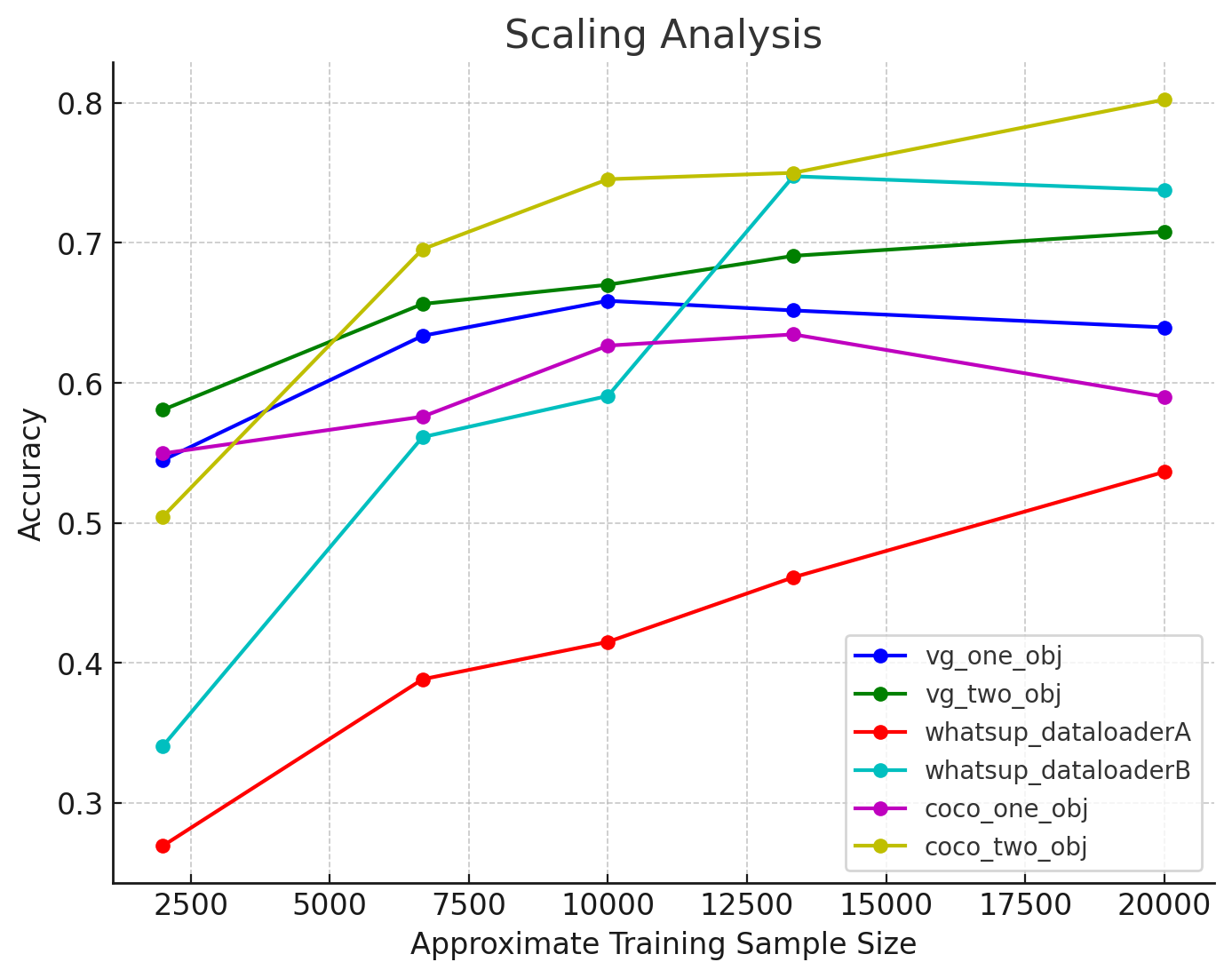}
    \caption{Result of linearly scaling training data. x values are approximate dataset sizes.}
    \label{fig:scale}
\end{figure}

\section{Empirical Observations of CLIP shortcomings}

In Fig.~\ref{fig:superposition_issue} we show the empirical effects of the superposition derived in Lemma 1. In summary, the figure serves to show how an image with an increasingly greater number of objects present embeds increasingly farther from the text label for any one of those objects in the CLIP latent space. The degree of this effect is such that beyond 6-8 objects in one image, CLIP embeddings of random noise images are similarly close to those object text labels.

\begin{figure}[h]
    \centering
{\includegraphics[width=0.49\linewidth]{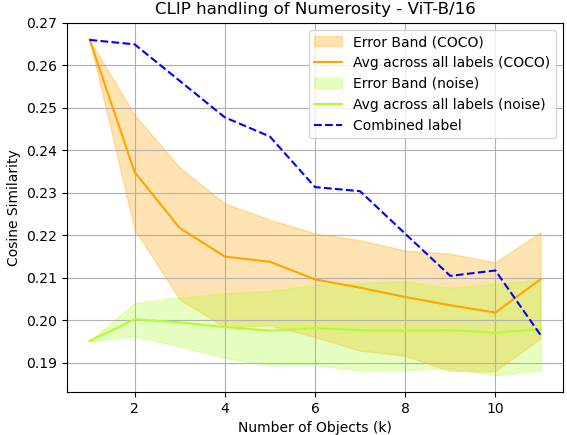}}
{\includegraphics[width=0.49\linewidth]{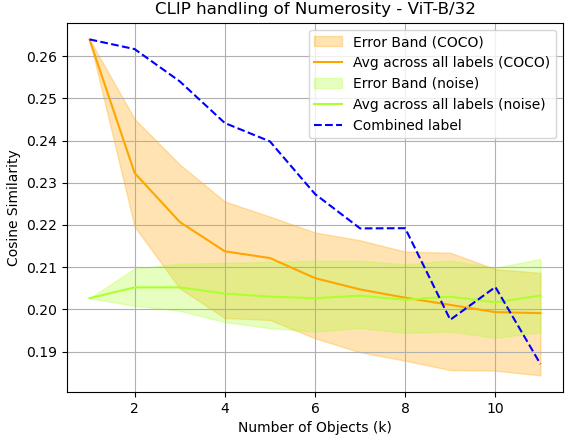}}
{\includegraphics[width=0.49\linewidth]{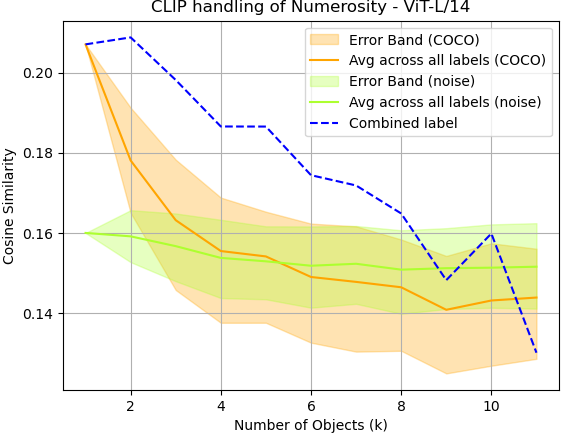}}
{\includegraphics[width=0.49\linewidth]{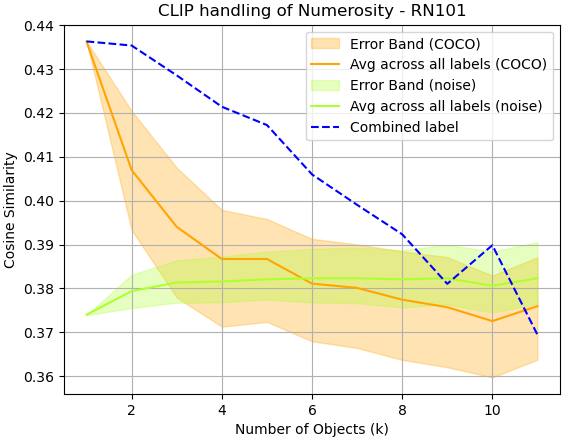}}
    \caption{For each image in COCO-validation, we identify k objects with labels. Then we take the cosine similarity between all k labels and the image. The orange line shows the average cosine similarity for images with k objects and all labels that appear in the image.
    The blue dotted line shows the cosine similarity for a label which combines all k labels and the image being considered, averaged for each k.
    The green plot shows the average cosine similarity score between 5 random noise images and the labels of all COCO-validation images. The error bands indicate the 25th and 75th quartile.}
    \label{fig:superposition_issue}
\end{figure}

\section{Definitions and Proofs}

\subsection{Definitions Addendum}

\textbf{Definition 1.}
For x = the concept of a bird, its mapping in $\mathbb{I}$ is an image of a bird, and its mapping in $\mathbb{T}$ is ``bird". 
For all concepts x in $\mathbb{V}$, there is an equivalence class in $\mathbb{I}$ and $\mathbb{T}$ that unambiguously represent that concept, respectively. 
We narrow the scope of all images to 1 representative image per concept. So for the object concept of a ``bird", there is one image which is equivalent. Similarly, for each concept x there is only 1 corresponding $d \in \mathbb{T}$.

We also choose $M$ object concept elements toward the subset $\mathbb{V}$.
In the manner of large ontologies (such as ImageNet1000), we take it to be true that there exist at least $M\leq$1000 object classes which are mutually exclusive. This means that for any representative image in the set of $M$, an expert human reviewer would be able to assign it to that unique object class. These object classes must be \textit{subordinate categories} on any hierarchical directed tree of visual concepts. (An example concept hierarchy: \textit{car} and \textit{bus} are lower-level concepts that are below \textit{vehicle}.) This ensures no two concepts in $\mathbb{V}$ overlap in semantics.

\subsection{Conditions Addendum}

\textbf{Condition 1.}
Semantic separability means: if human annotators would agree that a text caption appropriately describes an image, the embeddings for that caption and image should have high cosine similarity. Conversely, the score should be low if an annotator deems the caption to be inappropriately matched to the image.

Notably, this condition does not require correct hierarchical organization among concepts or semantically accurate placement of synonymous phrases. In reality there exist more than N distinct concept categories and each category may have synonymous text representations and multiple instances and viewpoints of the representative visual object. Our goal is to set up extremely minimal geometric requirements for $C$ to evaluate whether they are possible to attain.

This condition must be satisfied for $C$ to perform zero-shot image classification, retrieval, and semantic similarity search. Example benchmarks that pose this challenge include Imagenet, COCO, LAION, and many more.

\textbf{Condition 2.}
This condition must be satisfied for $C$ to perform tasks where the model must identify attributes associated with different objects in a scene. This could be towards scene understanding, vision question answering, or accurate image retrieval.
Specific datasets include CLEVR-bind, NCDataset-grayscale, VL-checklist, Sugarcrepe, ARO, and MMVP.

\textbf{Condition 3.}
This condition must be satisfied for $C$ to to perform tasks that require compositional image understanding. This could be for image captioning, text-guided image generation, spatial navigation, and more.  
Specific datasets include WhatsUP, Coco-spatial, MMVP, etc.

\textbf{Condition 4.}
This condition must be satisfied for $C$ to perform well on vision-language tasks that include prompts with negations. Note that this condition is a very relaxed interpretation of negation: 
Strictly semantically speaking,``not X" is a correct semantic pair with any image that does not have X, requiring a cosine similarity near 1. But the definition of the negation condition we impose does not require such granularity. Again, we seek to pose minimal constraints to identify whether there is some version of negation CLIP could attain under ideal settings.
Specific datasets that require negation understanding include NegBench, as well as other compositional benchmarks like VLM-checklist, Sugarcrepe, or MMVP.

\subsection{Lemma 2. Addendum}

\textbf{Derivation of $p$.} We define the perturbed vector:
\begin{equation}
    \textbf{i}(x_a) = (1 - \delta)\textbf{ i}(x) + p \textbf{t}(a), \| \textbf{i}(x_a) \|^2 = 1
\end{equation}
Expanding the norm, we have:
\begin{equation}
\begin{aligned}
     \| \textbf{i}(x_a) \|^2 &= \left\| (1 - \delta) \textbf{i}(x) + p \textbf{t}(a) \right\|^2 \\
     \| \textbf{i}(x_a) \|^2 &= (1 - \delta)^2 \| \textbf{i}(x) \|^2 + p^2 \| \textbf{t}(a) \|^2 \\&+ 2(1 - \delta) p \textbf{i}(x)\cdot \textbf{t}(a) 
\end{aligned}
\end{equation}
    
Since \( \textbf{i}(x) \) and \( \textbf{t}(a) \) are unit vectors, this simplifies to:
\begin{equation}
    \| \textbf{i}(x_a) \|^2 = (1 - \delta)^2 + p^2 + 2(1 - \delta) p \cos \theta
\end{equation}
where \( \cos \theta = \textbf{i}(x) \cdot \textbf{t}(a) \). For \( \textbf{i}(x_a) \) to be a unit vector, we set the right hand side to 1:
\begin{equation}
    \begin{split}
  (1 - \delta)^2 + p^2 + 2(1 - \delta) p \cos \theta = 1 \\   
  -2\delta + \delta^2 + p^2 + 2(1 - \delta) p \cos \theta = 0\\
    \end{split}
\end{equation}
Notice that this is now a quadratic equation in \( p \):
\begin{equation}
   p^2 + 2(1 - \delta) p \cos \theta - (2\delta - \delta^2) = 0. 
\end{equation}
Use the quadratic formula:
\begin{equation}
\resizebox{.85\linewidth}{!}{$
    p = \frac{-2(1 - \delta) \cos \theta \pm \sqrt{[2(1 - \delta) \cos \theta]^2 + 4(2\delta - \delta^2)}}{2} \\
$}
\end{equation}
We can simplify the above to find the correct value of \( p \) that ensures \( \textbf{i}(x_a) \) is a unit vector. 
\begin{equation}
  p = -(1 - \delta) \cos \theta \pm \frac{\sqrt{4(1 - \delta)^2 \cos^2 \theta + 8\delta - 4\delta^2}}{2}.  
\end{equation}
\bigskip

\noindent \textbf{Analysis for noise vectors.}
In Lemma 2 we derived 

\begin{equation}
\boxed{
\begin{aligned}
    \textbf{i}(x_a, y_b) &=\frac{(1 - \delta)( \textbf{i}(x)+\textbf{i}(y)) + p\textbf{t}(a) + q\textbf{t}(b)}{2} \\
    &= \textbf{i}(x_b, y_a)
    \end{aligned}
    }
\end{equation} 
using $\textbf{i}(x_a) = (1 - \delta) \textbf{i}(x) + p \textbf{t}(a)$.

Now we show that analytically, the inclusion of a noise vector $\boldsymbol{\epsilon}$ does not change the results. Specifically, we want to simulate 
 $$\textbf{i}(x_a) = \frac{(1 - \delta) \textbf{i}(x) + \delta \textbf{t}(a) + \boldsymbol{\epsilon}}{\left\|norm\right\|}$$ for some randomly sampled $\boldsymbol{\epsilon}$ and $\delta$. 

In Fig. \ref{fig:cossim_hist} we showcase the results of sampling $\boldsymbol{\epsilon}$ from a standard normal distribution, with varying weights. 
\begin{figure}[h]
    \centering

    \includegraphics[width=0.45\linewidth]{images/cosine_similarities_obj0-8_attr0-2_noise0-2.png}
    \includegraphics[width=0.45\linewidth]{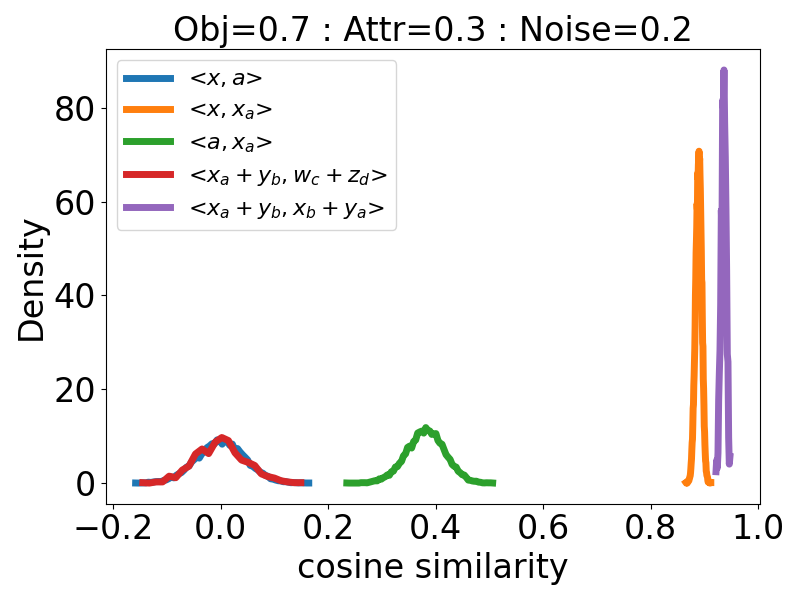}
    \includegraphics[width=0.45\linewidth]{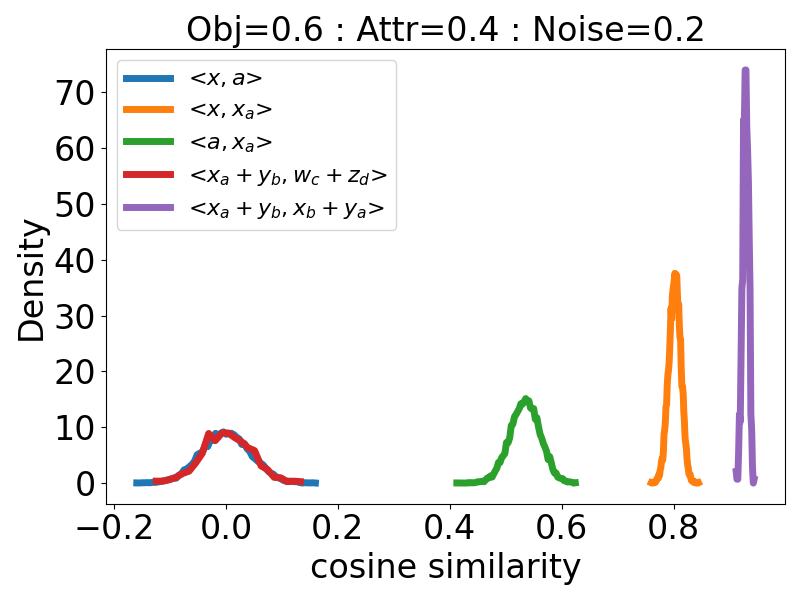}
    \includegraphics[width=0.45\linewidth]{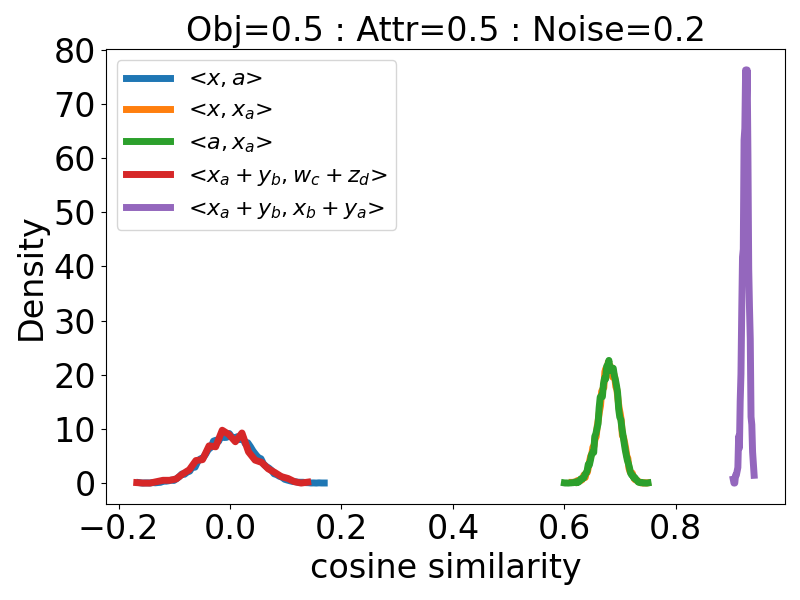}
    \caption{Titles of each subplot indicate the weights of the different components composing $\textbf{i}(x_a,y_b), \textbf{i}(x_b,y_a)$.
    Obj= indicates weight of the object concept embeddings $\textbf{i}(x), \textbf{i}(y)$, Attr= indicates weight of the attribute text embeddings $\textbf{t}(a), \textbf{t}(b)$, and Noise= indicates weight of the noise vector $\boldsymbol{\epsilon}$. In the legend, object concepts and attribute embeddings are denoted in shorthand: $x = \textbf{i}(x), a=\textbf{t}(a)$, and so on.}
    \label{fig:cossim_hist}
\end{figure}

We observe the following relations remain consistent:
\begin{itemize}
\item As expected of randomly initialized vectors in high dimensions: $\textbf{i}(x)\cdot\textbf{t}(a) \approx 0$
    \item For some unrelated object-attribute pairs, their image embeddings are roughly orthogonal as well: $\textbf{i}(w_c,z_d)\cdot\textbf{i}(x_a,y_b) \approx 0$
    \item The strong conclusion from Lemma 2 is approximately always true: $\textbf{i}(x_b,y_a)\cdot\textbf{i}(x_a,y_b) \approx 1$
\end{itemize}
regardless of $\textbf{i}(x_a)\cdot\textbf{t}(a), \textbf{i}(x)\cdot \textbf{i}(x_a)$.
As such, we see that even if there is noise in the superpositions described in Lemma 1 and 2, $C$ still cannot disambiguate between different pairings of the same two attributes and two objects.

\bigskip

\subsection{Contradiction for Condition 1 and 3}
Now we show Condition 3 cannot be met if Condition 1 is met. Below we will show two impossibility cases and prove them.

\begin{lemma}
\textbf{$C$ cannot accurately represent both the distance between spatial locations and relationships at the same time. }
\end{lemma}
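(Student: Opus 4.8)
The plan is to leverage Lemma 1 (linear superposition) together with Condition 1.2 to force a rigid algebraic form on the embeddings of images with spatial content, and then exhibit a minimal three-image configuration in which Conditions 3.1 and 3.2 make contradictory sign demands on a single inner product. First I would argue that, by Condition 1.2, an image carrying a localization or relationship tag must be a small perturbation of the corresponding tag-free embedding: writing $\textbf{i}(x,g^{<loc>}) = (1-\delta)\textbf{i}(x) + \textbf{e}_{\perp}$ and $\textbf{i}(x,g^{<rel>},y) = (1-\delta)\textbf{i}(x,y) + \textbf{e}_{\perp}$, where $\textbf{e}_{\perp}$ is the (small-norm) component orthogonal to the base embedding and $\delta$ is fixed by the unit-norm constraint exactly as in Lemma 2's derivation of $p$. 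The key point is that $\delta$ depends only on $\|\textbf{e}_\perp\|$, so all single-tag perturbations of the same base share the same radial weight $1-\delta$, and the only freedom left is the \emph{direction} of each $\textbf{e}_\perp$.

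Next I would set up the three-image gadget. Fix objects $x,y$ and consider image $1$ with $x$ in location $g_1$ and relationship $g_3$ to $y$, image $2$ with $x$ in the \emph{same} location $g_1$ but a different relationship, and image $3$ with a \emph{different} location but the \emph{same} relationship $g_3$. Condition 3.3 (the "same location/relation $\Rightarrow$ closer" clause) forces the localization-induced perturbation to be a fixed vector $\textbf{e}^{<loc>}_{g}$ depending only on $g$ (and likewise $\textbf{e}^{<rel>}_{g}$ for relationships), because otherwise one could not guarantee that matching tags always raises similarity; so I would first prove that these perturbation vectors are well-defined functions of the tag alone. Then the orthogonal component of each of images $1,2,3$ decomposes as a sum of a location term and a relation term, $\textbf{e}_{\perp,k} = \textbf{e}^{<loc>}_{g(k)} + \textbf{e}^{<rel>}_{g(k)}$ (plus cross-terms that vanish or are absorbed into $\delta$). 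Computing $\textbf{e}_{\perp,1}\cdot\textbf{e}_{\perp,2}$ and $\textbf{e}_{\perp,1}\cdot\textbf{e}_{\perp,3}$ and using that distinct tags in the same family must contribute \emph{opposite} perturbations — "above" vs "below" are antipodal displacements of the same object, so $\textbf{e}^{<loc>}_{g_1} = -\textbf{e}^{<loc>}_{g_2}$ by the symmetry forced by Condition 1.2 plus unit norm — yields $\textbf{e}_{\perp,1}\cdot\textbf{e}_{\perp,2} = -\,\textbf{e}_{\perp,1}\cdot\textbf{e}_{\perp,3}$, exactly the relation flagged in the excerpt. Finally, Condition 3.1 demands $\textbf{i}(x,g_1)\cdot\textbf{i}(x,g_2) < 1$, i.e. $\textbf{e}^{<loc>}_{g_1}\cdot\textbf{e}^{<loc>}_{g_2} < \|\textbf{e}^{<loc>}\|^2$ which given antipodality is automatic; but combining 3.1 and 3.2 \emph{simultaneously} through the shared quantity $\textbf{e}_{\perp,1}\cdot\textbf{e}_{\perp,2} = -\,\textbf{e}_{\perp,1}\cdot\textbf{e}_{\perp,3}$ forces one of the two inner products to have the wrong sign to keep both similarities below $1$ while keeping the 3.3 orderings, giving the contradiction.

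I expect the main obstacle to be making the "perturbation depends only on the tag" step fully rigorous: a priori $\textbf{e}_\perp$ could depend on the base object $x$ or on the companion object $y$, and one must show that Conditions 3.3 and 1.2, applied across enough choices of $x,y,z$, pin it down (or at least pin down its relevant pairwise inner products) up to the symmetry that makes the sign argument go through. The clean way is probably to argue about the inner products $\textbf{e}_{\perp,j}\cdot\textbf{e}_{\perp,k}$ directly rather than the vectors themselves, phrasing everything as a small linear system in those inner products and showing it is infeasible — this sidesteps having to identify the vectors canonically. The antipodality claim for opposite spatial tags is the other delicate point; I would justify it from Condition 1.2 applied symmetrically to the pair $(g_1,g_2)$ together with the requirement that $\textbf{i}(x,g_1)$ and $\textbf{i}(x,g_2)$ be equidistant from $\textbf{i}(x)$, which forces the two perturbations to lie on a common line through the origin with equal and opposite magnitude after projecting out $\textbf{i}(x)$. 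Everything after that is the short sign bookkeeping already sketched in the excerpt.
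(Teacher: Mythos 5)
Your overall skeleton matches the paper's: the same three-image gadget (one image sharing a location with image 1, another sharing a relationship), the perturbation form $(1-\delta)\,\textbf{i}(x,y)+\textbf{e}_{\perp}$ from Condition 1.2, the key identity $\textbf{e}_{\perp,1}\cdot\textbf{e}_{\perp,2}=-\,\textbf{e}_{\perp,1}\cdot\textbf{e}_{\perp,3}$, and the contradiction with the requirement that both inner products be positive. However, the two steps you yourself flag as delicate are genuine gaps as written. First, your justification of antipodality---that $\textbf{i}(x,g_1)$ and $\textbf{i}(x,g_2)$ being equidistant from $\textbf{i}(x)$ forces the two perturbations onto a common line with opposite signs---does not hold: equidistance only fixes the common norm $\|\textbf{e}_{\perp}\|=\sqrt{2\delta-\delta^2}$, while the directions remain free in an $(N-1)$-dimensional orthogonal subspace. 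The paper obtains $\textbf{e}_{\perp,R}=-\,\textbf{e}_{\perp,L}$ from a different principle: the ideal placement must maximize the separation of antonymous compositions, and antipodality is the optimum of that objective. Second, your additive decomposition $\textbf{e}_{\perp,k}=\textbf{e}^{<loc>}_{g(k)}+\textbf{e}^{<rel>}_{g(k)}$ with components depending only on the tag, together with vanishing cross terms, is assumed rather than derived; Condition 3.3 constrains orderings of inner products, not the vectors themselves, and does not by itself pin down tag-only dependence or the orthogonality between the location and relation components that your sign bookkeeping needs.

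The paper sidesteps both issues by never decomposing the perturbation by tag at all: it writes each image embedding as $\textbf{e}_{||}+\textbf{e}_{\perp,k}$ and solves the constrained argmax of similarity against the three caption embeddings (positive sign for captions matching the image, negative otherwise), which yields $\textbf{e}_{\perp,k}\propto\pm\textbf{t}_1\pm\textbf{t}_2\pm\textbf{t}_3$ with the sign pattern given by the match table. The identity $\textbf{e}_{\perp,1}\cdot\textbf{e}_{\perp,2}=-\,\textbf{e}_{\perp,1}\cdot\textbf{e}_{\perp,3}$ then follows by direct computation (using near-orthogonality of the caption embeddings), and the paper additionally verifies it is invariant under arbitrary reweighting $\beta_j$ of the captions, which is what makes the contradiction robust. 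To salvage your route you would need to actually derive tag-only dependence and antipodality from the stated conditions, or, as you suggest, recast the argument purely in terms of the pairwise inner products $\textbf{e}_{\perp,j}\cdot\textbf{e}_{\perp,k}$ and show the resulting system of sign constraints is infeasible; as written, those two steps do not go through.
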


\begin{proof}
We first derive $\textbf{i}(x,g^{<rel>}, y)$ for two antonymous $g^{<rel>}$ to satisfy Condition 1.2. Then we consider an example scenario with two objects, two spatial relationships, and two localizing terms. We will find that for three sample images, the cosine similarities between their embeddings and four textual clauses will have to contradict Condition 3 for some pairs.

For two concepts \(x\) and \(y\), their combined embedding is Eq.~(\ref{eq:sup}).
Now, if we want to express some compositional relationship $g^{<rel>} \in \mathbb{G}$ between x and y such that $\textbf{i}(x,g_1^{<rel>}, y) \neq \textbf{i}(x,g_2^{<rel>},y)$, we can write
\begin{equation}
    \begin{split}
   \textbf{i}(x,g^{<rel>}_1,y) 
        = (1-\delta)\,\textbf{i}(x,y) \;+\; \textbf{v}_1\\
        \textbf{i}(x,g^{<rel>}_2,y) 
        = (1-\delta)\,\textbf{i}(x,y) \;+\; \textbf{v}_2
    \end{split}
\end{equation}

where
\(\delta \ll 1\) and $\textbf{v}_1 \neq \textbf{v}_2$. Similar to Lemma~\ref{lemma_attbind}, $\mathbf{v}$ is a small location-specific component, as $\textbf{i}(x,g^{<rel>},y)$ must remain close to $\textbf{i}(x,y)$ per Condition 1.2.



 Let $g_L^{<rel>} = g_L$ be the relational concept whose equivalent mapping in $\mathbb{T}$ is ``\_ left of \_", and $g_R^{<rel>} = g_R$ ``\_ right of \_". Then we can write:
\[
   \textbf{i}(x, g_L, y) 
   \;=\; 
   (1-\delta)\,\textbf{i}(x,y) + \textbf{e}_{\perp,L}\]
   \[
   \textbf{i}(x, g_R, y)
   \;=\; 
   (1-\delta)\,\textbf{i}(x,y) + \textbf{e}_{\perp,R}
\]
where \(\textbf{e}_{\perp,L},\textbf{e}_{\perp,R}\) both lie in the orthogonal error subspace (of dimension \(N-1\)) and have fixed magnitude \(\sqrt{2\delta - \delta^2}\) such that $\textbf{i}(x, g_L, y) $ is a unit vector. 
Notice that we cannot use the intuition from Lemma 2 that $v$ must be composed of the textual component of $g_L$ - an image with a melon above a bed does not intuitively need to embed closely with the text embedding for ``above".


\begin{figure}[h]
   \centering
   \fbox{
   \includegraphics[width=0.7\linewidth]{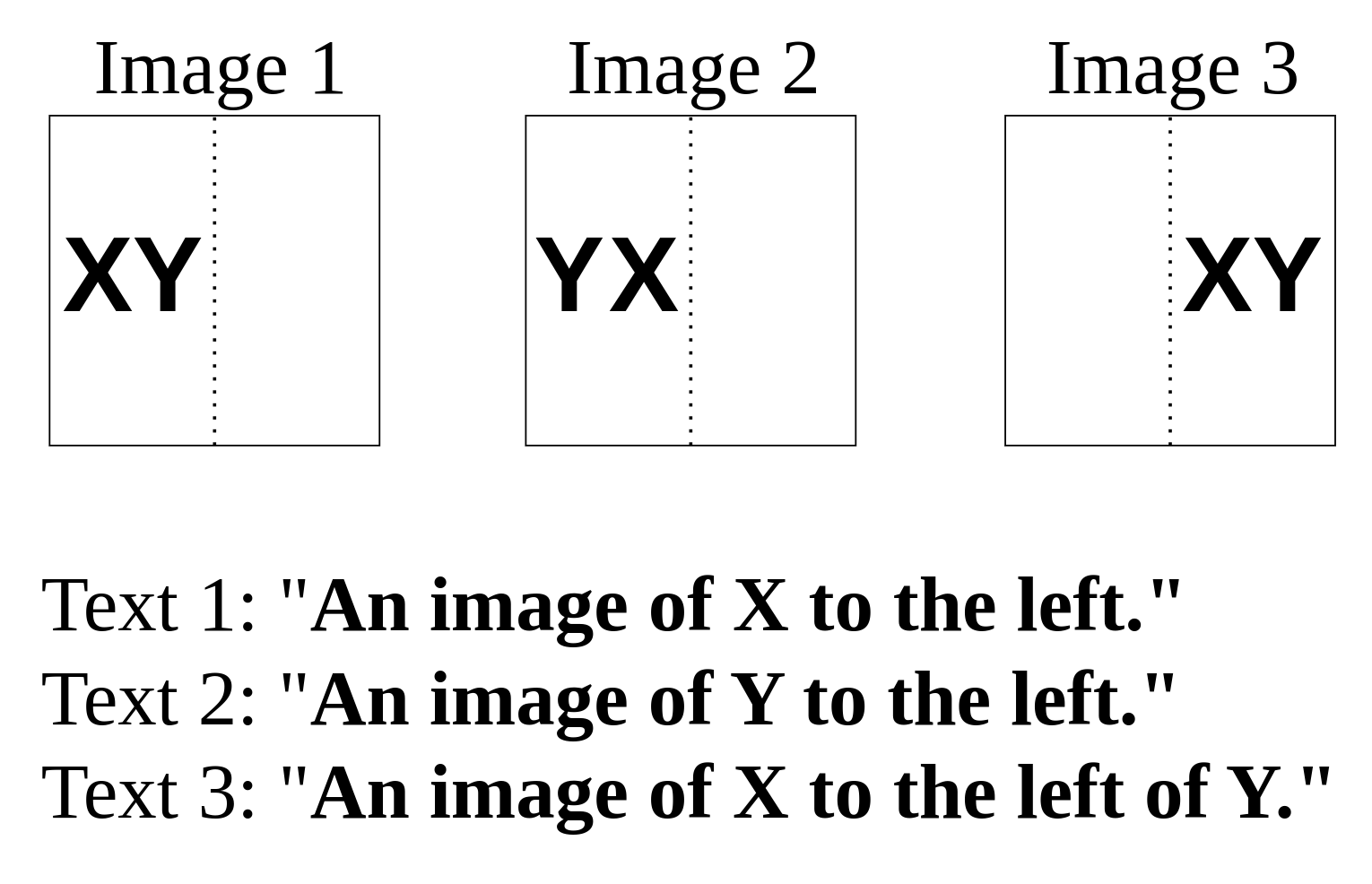}}
   \caption{Simple setup. Here the only relations that exist are ``\_ left \_", ``\_ right of \_", ``\_ to the left", and ``\_ to the right". 
   }
   \label{fig:xy-setup}
\end{figure}

Now we formulate a proof by contradiction.
Consider three images and three text prompts, as shown in Fig.~\ref{fig:xy-setup}. In addition to $g_L, g_R$, we also introduce $g_l^{<loc>} = g_l, g_r^{<loc>} = g_r$, to represent ``\_ to the left" and ``\_ to the right" in $\mathbb{T}$, respectively. These are the only four compositional concepts we consider for simplicity, but conclusions will generalize.

We denote $(1-\delta)\,\textbf{i}(x,y) = \textbf{e}_{||}$, the three images as $\textbf{i}( \text{image}_k ) = \textbf{i}_k$ and $\textbf{t}(\text{text}_j) = \textbf{t}_j$, for $k,j \in {1,2,3}$. 
$\textbf{i}_k = \textbf{e}_{||} + \textbf{e}_{\perp,k}$ where $\textbf{e}_{||}$ is the shared parallel component for all $\textbf{i}_k$. 

Consider some images $\textbf{i}_w, w>3$, where $x,y$ do not have $g_L, g_R, g_l$ or $, g_r$ as a location or relationship. For example $\textbf{i}_4$ could be an image of $x$ on top of $y$ in the center of the frame. In this case,
$$\textbf{i}_k \cdot \textbf{i}_w = (\textbf{e}_{||} + \textbf{e}_{\perp,k})\cdot(\textbf{e}_{||} + \textbf{e}_{\perp,w}) = |\textbf{e}_{||}|, k \in (1,2,3)$$ as these image pairs share no compositional aspects in common. Since $\textbf{i}_1,\textbf{i}_2$ share the same localization for the objects present, satisfaction of Condition 3.2 requires that $\textbf{i}_1\cdot \textbf{i}_2 > |\textbf{e}_{||}|$.
Similarly, as $\textbf{i}_1,\textbf{i}_2$ share the same relationship between the objects present, satisfaction of Condition 3.3 requires that $\textbf{i}_1 \cdot \textbf{i}_3 > |\textbf{e}_{||}|$. 

This could only happen if embeddings of images 1,2 and i,3 have perpendicular components that are partially parallel. 
In other words, \textbf{in order to satisfy Conditions 3.2 and 3.3, 
there must exist a $C$ in which $\textbf{e}_{\perp,1}\cdot \textbf{e}_{\perp,2} > 0$ and $\textbf{e}_{\perp,1} \cdot \textbf{e}_{\perp,3} > 0$}.


For each image in Fig.~\ref{fig:xy-setup}, the embeddings must optimize the following local similarities:
\begin{equation}
\begin{split}
    \textbf{i}_1 = \operatorname*{argmax}_{\textbf{i}_1}(\textbf{i}_1 \cdot \textbf{t}_1 + \textbf{i}_1 \cdot \textbf{t}_2 + \textbf{i}_1 \cdot \textbf{t}_3) \\
    \textbf{i}_2 = \operatorname*{argmax}_{\textbf{i}_2}(\textbf{i}_2 \cdot \textbf{t}_1 + \textbf{i}_2\cdot \textbf{t}_2 - \textbf{i}_2\cdot\textbf{t}_3) \\
    \textbf{i}_3 = \operatorname*{argmax}_{\textbf{i}_3}(-\textbf{i}_3\cdot \textbf{t}_1 - \textbf{i}_3\cdot \textbf{t}_2 + \textbf{i}_3\cdot\textbf{t}_3) 
\label{eq:i_def}
\end{split}
\end{equation}
 subject to $\left\|\textbf{i}_k\right\| = 1$. This allows us to solve for $\textbf{e}_{\perp,k}$s. For $k=1$: 
\begin{equation}
    \begin{split}
\textbf{e}_{\perp,1} = \operatorname*{argmax}_{\textbf{e}_{\perp,1}}((\textbf{e}_{||} + \textbf{e}_{\perp,1}) \cdot \textbf{t}_1 + ((\textbf{e}_{||} + \textbf{e}_{\perp,1}) \cdot \textbf{t}_2 + \\((\textbf{e}_{||} + \textbf{e}_{\perp,1})\cdot\textbf{t}_3)
    \end{split}
\end{equation}
subject to $\left\|\textbf{e}_{\perp,1}\right\| = \sqrt{2\delta - \delta^2}$. $\textbf{e}_{||}$ is fixed so this becomes:
\begin{equation}
\begin{aligned}
\textbf{e}_{\perp,1} &= \operatorname*{argmax}_{\textbf{e}_{\perp,1}} (\textbf{e}_{||} \cdot (\textbf{t}_1 + \textbf{t}_2 + \textbf{t}_3) + \textbf{e}_{\perp,1}\cdot(\textbf{t}_1 + \textbf{t}_2 + \textbf{t}_3))
\\
&= (\textbf{t}_1 + \textbf{t}_2 + \textbf{t}_3)\cdot \sqrt{2\delta - \delta^2}
\label{eq:e1}
    \end{aligned}
\end{equation}

Similarly, we get:
\begin{equation}
\begin{split}
    \textbf{e}_{\perp,2} = (\textbf{t}_1 + \textbf{t}_2 - \textbf{t}_3)\cdot \sqrt{2\delta - \delta^2} \\ \textbf{e}_{\perp,3} =(-\textbf{t}_1 - \textbf{t}_2 + \textbf{t}_3) \cdot \sqrt{2\delta - \delta^2}\label{eq:e2}
\end{split}
\end{equation}

Now taking the dot products, we have:
\begin{equation}
\begin{split}
        \textbf{e}_{\perp,1} \cdot \textbf{e}_{\perp,2} = (|\textbf{t}_1| + |\textbf{t}_2| - |\textbf{t}_3|) \cdot 2\delta = 2\delta \\
    \textbf{e}_{\perp,1} \cdot \textbf{e}_{\perp,3} = (-|\textbf{t}_1| - |\textbf{t}_2| + |\textbf{t}_3|) \cdot 2\delta = -2\delta\\
    \textbf{e}_{\perp,2} \cdot \textbf{e}_{\perp,3} = (-|\textbf{t}_1| -|\textbf{t}_2| - |\textbf{t}_3|) \cdot 2\delta = -6\delta
\end{split}
\end{equation}

Practically speaking, it is possible by adding more representative samples in the training dataset to change the weights of $\textbf{t}_j$s. That is, for some $\beta_1 + \beta_2 + \beta_3 = 3$, Eqs.~(\ref{eq:e1},\ref{eq:e2}) could be reformulated as:
\begin{equation}
\begin{split}
\textbf{e}_{\perp,1} = (\beta_1\textbf{t}_1 + \beta_2\textbf{t}_2 + \beta_3\textbf{t}_3)\cdot \sqrt{2\delta - \delta^2} \\
\textbf{e}_{\perp,2} = (\beta_1\textbf{t}_1 + \beta_2\textbf{t}_2 - \beta_3\textbf{t}_3)\cdot \sqrt{2\delta - \delta^2} \\
\textbf{e}_{\perp,3} =(-\beta_1\textbf{t}_1 - \beta_2\textbf{t}_2 + \beta_3\textbf{t}_3) \cdot \sqrt{2\delta - \delta^2}\label{eq:e3}
\end{split}
\end{equation}

The dot products then become:
\begin{equation}
\boxed{
\begin{aligned}
\textbf{e}_{\perp,1} \cdot \textbf{e}_{\perp,2} 
   &= 2\delta\,\bigl(\beta_1^2 + \beta_2^2 - \beta_3^2\bigr)\\
     \textbf{e}_{\perp,1} \cdot \textbf{e}_{\perp,3} 
   &= 2\delta\,\bigl(-\beta_1^2 \;-\;\beta^2 \;+\;\beta_3^2\bigr)\\
   \textbf{e}_{\perp,2} \cdot \textbf{e}_{\perp,3} 
   &=\;-2\delta\,(\beta_1^2 + \beta_2^2 + \beta_3^2)
   \label{eq:beta_inv}
\end{aligned}
}
\end{equation}

Note that regardless of the reweighting, $\textbf{e}_{\perp,1} \cdot \textbf{e}_{\perp,2} = - \textbf{e}_{\perp,1} \cdot \textbf{e}_{\perp,3}$. This directly negates our previous observation that an ideal $C$ must satisfy $\textbf{e}_{\perp,1}\cdot \textbf{e}_{\perp,2} > 0$ and $\textbf{e}_{\perp,1} \cdot \textbf{e}_{\perp,3} > 0$. As such, there exists no $C$ which sufficiently represents both relational and objective space in the image embeddings.

\end{proof}

\begin{lemma}
\textbf{    $C$ cannot accurately represent compositional concepts of different hierarchy.}
\end{lemma}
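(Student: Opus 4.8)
The plan is to push the linear-superposition structure of Lemma~1 one level deeper: once Condition~1.2 forces every composite clause to be written as $(1-\delta)$ times the embedding of its constituent objects plus a small orthogonal ``error'' vector that carries the compositional information, the error vector of a spatial relation and the error vectors of the localization concepts it subsumes turn out to be mutually incompatible. First I would fix notation in the style of the preceding lemma. For a two-object scene write $\textbf{i}(x,g^{<rel>},y) = (1-\delta)\,\textbf{i}(x,y) + \textbf{e}_{\perp}(g^{<rel>})$ with $\textbf{e}_{\perp}\perp\textbf{i}(x,y)$ and $\|\textbf{e}_{\perp}\| = \sqrt{2\delta-\delta^2}$; for a single-object scene write the localization embedding $\textbf{i}(x,g^{<loc>}) = (1-\delta)\,\textbf{i}(x) + \textbf{e}'_{\perp}(g^{<loc>})$ with $\textbf{e}'_{\perp}\perp\textbf{i}(x)$ and the same fixed norm. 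The notion of ``different hierarchy'' I would make precise is that a binary relation such as $g^{<rel>}_{above}$ sits strictly above the unary localization concepts $g^{<loc>}_{top}$ and $g^{<loc>}_{bottom}$ it decomposes into: in the fixed $2$D frame of reference used in the preceding proof, the clause ``$x$ above $y$'' refines both ``$x$ toward the top'' (relative to $y$) and ``$y$ toward the bottom'' (relative to $x$). An ideal $C$ honoring this hierarchy must therefore rank $\textbf{i}(x,g_{above},y)$ closer to the ``$x$ at the top'' clause than to the ``$x$ at the bottom'' clause, and \emph{simultaneously} closer to the ``$y$ at the bottom'' clause than to the ``$y$ at the top'' clause.

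Next I would turn these two rankings into constraints on error vectors exactly as in the preceding lemma: expand the two pairs of cosine similarities, cancel the common parallel contribution and the cross terms that vanish for near-orthogonal high-dimensional vectors, and read off $\textbf{e}_{\perp}(g_{above})\cdot\textbf{e}'_{\perp}(g_{top}) > \textbf{e}_{\perp}(g_{above})\cdot\textbf{e}'_{\perp}(g_{bottom})$ from the first ranking and $\textbf{e}_{\perp}(g_{above})\cdot\textbf{e}'_{\perp}(g_{bottom}) > \textbf{e}_{\perp}(g_{above})\cdot\textbf{e}'_{\perp}(g_{top})$ from the second. Because $g_{top}$ and $g_{bottom}$ are antonymous localization concepts, the separation argument of the preceding lemma — maximal distance between two unit vectors that share the same parallel component and have equal-norm orthogonal parts forces those parts to be antipodal, and more generally the correlation of error vectors must track the semantic (dis)similarity of the compositional concepts they encode — gives $\textbf{e}'_{\perp}(g_{top}) = -\,\textbf{e}'_{\perp}(g_{bottom})$. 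Substituting, the two inequalities become $\textbf{e}_{\perp}(g_{above})\cdot\textbf{e}'_{\perp}(g_{top}) > 0$ and $\textbf{e}_{\perp}(g_{above})\cdot\textbf{e}'_{\perp}(g_{top}) < 0$, which is impossible. Hence no unit-hypersphere $C$ satisfying Condition~1 can embed a relational clause consistently with the lower-hierarchy localization clauses it entails.

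I expect the hierarchy/entailment step, not the algebra, to be the main obstacle, and it has two delicate points. First, the localization error vector $\textbf{e}'_{\perp}(g_{top})$ could in principle depend on which object it decorates, which would weaken the cancellation in the final substitution (one would then only obtain a negative \emph{correlation}, rather than exact antipodality, between the ``$x$ at top'' and ``$y$ at bottom'' directions); I would handle this the way Lemma~\ref{lemma_attbind} and the preceding lemma handle analogous object-dependence — by restricting to a frame-agnostic configuration in which the choice of object plays no role in the relevant projections, or, failing that, by re-running the argument as an explicit three-image, three-text proof by contradiction in the style of Fig.~\ref{fig:xy-setup} (images ``$x$ above $y$'', ``$x$ alone near the top'', ``$y$ above $x$'', with matching texts), deriving $\textbf{e}_{\perp}(g_{above})\cdot\textbf{e}'_{\perp}(g_{top}) = -\,\textbf{e}_{\perp}(g_{above})\cdot\textbf{e}'_{\perp}(g_{bottom})$ directly from the $\operatorname*{argmax}$ characterizations of the embeddings. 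Second, I would justify that an ideal $C$ is genuinely obliged to honor this cross-hierarchy consistency by appealing to the benchmark motivation behind Condition~3 — captioning, text-guided generation, and spatial-VQA data routinely present relational and absolute-location phrasings of one and the same scene — and by noting that the requirement is simply the extension of Condition~3.3 from same-hierarchy comparisons ($g_1$ versus $g_2$ of the same type) to different-hierarchy ones (a relation versus the localizations it implies). If even this relaxed form of consistency is unattainable, the lemma follows \emph{a fortiori}.
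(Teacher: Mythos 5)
Your argument takes a genuinely different route from the paper's, and while its algebra would go through, it rests on a load-bearing assumption that you flag but never discharge: object-independence of the localization error vector, i.e.\ that the \emph{same} $\textbf{e}'_{\perp}(g_{top})$ perturbs both $\textbf{i}(x,g_{top})$ and $\textbf{i}(y,g_{top})$. Your contradiction (forcing $\textbf{e}_{\perp}(g_{above})\cdot\textbf{e}'_{\perp}(g_{top})$ to be simultaneously positive and negative) evaporates the moment the error direction may depend on the object it decorates: with distinct $\textbf{e}'_{\perp,x}(g_{top})\neq\textbf{e}'_{\perp,y}(g_{top})$, both of your ranking requirements can be satisfied at once, so no impossibility follows. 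Nothing in Conditions 1 or 3 forces object-independence, and the paper explicitly declines the analogous shortcut in its Lemma 3 (it refuses to assume the compositional perturbation is the text embedding of the functional word, and instead derives every error vector from a single $\operatorname{argmax}$ problem over one fixed scene with objects $x,y$). The three-image $\operatorname{argmax}$ fallback you mention is precisely the step that would have to carry the proof, and it is only gestured at. A secondary weakness: under the paper's definitions $g^{<loc>}$ is an absolute, fixed-frame location, so ``$x$ above $y$'' does not entail ``$x$ to the top''; your two cross-hierarchy rankings are at best statistical preferences you impose, not consequences of the stated conditions, so the ``ideal $C$ must honor them'' premise needs more than an appeal to benchmarks.

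For contrast, the paper's proof uses a different notion of hierarchy: a hypernym preposition (``beside'') subsuming two antonymous hyponyms (``left of'', ``right of''). By Lemma 1 the hypernym clause must embed as the superposition of the two hyponym clauses; since those differ only by antipodal error vectors ($\textbf{e}_R=-\textbf{e}_L$, inherited from the preceding lemma within the same $(x,y)$ scene), the superposition collapses to the relation-free $\textbf{t}(x,y)$, which is then erroneously at least as close to an unrelated clause such as ``$x$ above $y$'' as the specific clause ``$x$ left of $y$'' is. That argument needs only machinery already established and no cross-object assumption. If you want to keep your relation-versus-localization version, execute the $\operatorname{argmax}$ derivation explicitly so that the shared error directions (and the antipodality you invoke) come out of the optimization rather than being assumed.
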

\begin{proof}
Here we will show that general prepositions are erroneously closer to all unrelated prepositions.

Some prepositions are more general than others.
For example, $g_B^{<rel>} = g_B$ where $f_{G,T}(g_B) = \text{"\_ beside \_"}$ semantically includes both $g_L$ and $g_R$. The ideal placement for $\textbf{t}(x,g_B,y)$ should locally optimize for the following similarities:

\begin{equation}
    \begin{split}
    \textbf{t}(x,g_B,y) = \operatorname*{argmax}_{\textbf{t}(x,g_B,y)}\Big[
\textbf{t}(x,g_L,y) \cdot \textbf{t}(x,g_B,y)  \\
+ \textbf{t}(x,g_R,y) \cdot \textbf{t}(x,g_B,y) \\-\sum_{z=1}^{|\mathbb{G}\backslash\{g_L,g_R,g_B\}|} \textbf{t}(x, g_z, y) \cdot \textbf{t}(x, g_B,y)\Big]
    \end{split}
\end{equation}
We know from Lemma 1 that this means $\textbf{t}(x,g_B,y)$ should be a weighted superposition of $\textbf{t}(x,g_L,y)$ and $\textbf{t}(x,g_R,y)$.
Since we know $\textbf{e}_L = -\textbf{e}_R$, we can write:
$$\textbf{t}(x,g_L,y) = (1-\delta) \cdot \textbf{t}(x,y) + \textbf{e}_L$$
$$\textbf{t}(x,g_R,y) = (1-\delta) \cdot \textbf{t}(x,y) - \textbf{e}_L$$

Then the superposition of the two becomes $\textbf{t}(x,y)$. But this is a semantically erroneous placement for $\textbf{t}(x,g_B,y)$, as it will incorrectly be closer to any other instance of $x$ and $y$ co-appearing in a scene than the average $\textbf{t}(x,g^{<rel>},y)$.
For example, for $g_A^{<rel>} = g_A$ where $f_{G,T}(g_A) = \text{"\_ above \_"}$, $\textbf{t}(x,g_B,y)\cdot\textbf{i}(x,g_A,y) >\textbf{t}(x,g_L,y)\cdot \textbf{i}(x,g_A,y)$ even though the two captions are equally inapplicable.





\end{proof}

\subsection{Contradiction for Condition 1 and 4}
Now we show Condition 4 cannot be met if Condition 1 is met. Before we move to the proof, we first discuss in greater detail the possible arrangements of object concepts in $C$.

\noindent\textbf{Orthogonality.} One straightforward intuition is that since $C$ is a high dimensional space, for any two random concepts $x^1, x^2 \in \mathbb{V}$, they should be approximately orthogonal~\cite{6771362}:
\begin{equation}
    \textbf{t}(x^j)\cdot \textbf{t}(x^k) \approx 0 \quad \forall  j \neq k
\end{equation}

This makes it trivial to derive that $\textbf{t}(\neg x)\cdot \textbf{t}(y) \approx \textbf{t}(x)\cdot\textbf{t}(y) = 0$, violating Condition 4.2. We clarify in Lemma 5 why that is. In order to be more rigorous, we show that negation cannot be achieved even under strong idealistic conditions, where $\textbf{t}(x)$ are uniformly distributed and \textit{distinguishable} from one another. This requires perfect isotropy of $M$ concepts. 

\noindent\textbf{Isotropy.} Starting with $|\mathbb{V}|$ = $M \leq N$ concepts, we determine the ideal distribution for $\textbf{t}(x)$ $\forall x \in \mathbb{V}$. As in Lemma 1, we denote distinct concepts as: $\textbf{t}(x^1), \textbf{t}(x^2), ...$. Since all $x \in \mathbb{V}$ are mutually exclusive in semantics by definition, the distance between any two arbitrary concepts $x^1, x^2$ should be comparable to the distance between $x^1, x^k$ for some $x^k \in \mathbb{V}\backslash \{x^2\}$.
Then $C$ must minimize the variance among the cosine similarities of all pairs of concepts:
\begin{equation}
\begin{split} 
\operatorname*{min}_{\textbf{t}(x^j) \cdot \textbf{t}(x^k)} \quad \sum_{j=1}^{M} \sum_{k>j}^{M} \left( \textbf{t}(x^j) \cdot \textbf{t}(x^k) - \bar{s} \right)^2 \\
\text{s. t. } \|\textbf{t}(x^k)\| = 1, \quad \forall x^k \in \mathbb{V}
\end{split}
\label{eq1}
\end{equation}
where $\bar{s}$ is the mean cosine similarity:
\begin{equation}
\bar{s} = \frac{1}{\binom{M}{2}} \sum_{j=1}^{M} \sum_{k>j}^{M} \textbf{t}(x^j) \cdot \textbf{t}(x^k)
\end{equation}

\noindent Let $\textbf{t}(x^j)\cdot \textbf{t}(x^k) = s_{jk}$. Then the objective simplifies to:
\begin{equation}
\operatorname*{min}_{s_{jk}}\sum_{j=1}^{M} \sum_{k>j}^{M} \left( s_{jk}^2 - \bar{s}^2 \right) = \operatorname*{min}_{s_{jk}} \sum_{j=1}^{M} \sum_{k>j}^{M} (s_{jk})^2 - \bar{s}^2 \binom{M}{2}
\end{equation}

\noindent Differentiate the first and second terms with respect to $s_{jk}$:
\begin{equation}
\frac{\partial}{\partial s_{jk}} \sum_{j=1}^{M} \sum_{k>j}^{M} s_{jk}^2 = 2 s_{jk},
\end{equation}
\begin{equation}
\frac{\partial}{\partial s_{jk}} \left( \bar{s}^2 \binom{M}{2} \right) =  2\binom{M}{2} \bar{s} \frac{\partial \bar{s}}{\partial s_{jk}} = \frac{1}{\binom{N}{2}}2\binom{M}{2} \bar{s} = 2\bar{s}
\end{equation}

\noindent This means that the optimum of Eq.~(\ref{eq1}) is reached when 
\begin{equation}
\begin{split}
        2\bar{s} -2s_{jk}=0 
    \quad\forall j,k \in (1,M), j\neq k
\end{split}
\end{equation}

In other words, the cosine similarities between any two object concepts must be the same as the average. That requires for all $\textbf{t}(x^k)$ to be isotropically distributed in $\mathbb{R}^N$. The optimal arrangement of all $\textbf{t}(x^k)$ is then a $M$-1 dimensional regular simplex, which is a structure of $M$ equiangular unit vectors in $\mathbb{R}^M$. Then we have that the cosine similarity of two random vectors in $C$ is:
\begin{equation}
 \boxed{
 \begin{gathered}
 $$\textbf{t}(x^j)\cdot \textbf{t}(x^k) = - \frac{1}{M-1} \quad \forall  j \neq k$$ 
 \end{gathered}
 }
\end{equation}
as all vector pairs in a regular simplex have angles $\arccos{(-\frac{1}{M-1})}$~\cite{Parks2002-bi}. 

\begin{lemma}
\textbf{Even under isotropic concept distribution, $C$ cannot accurately represent negation.}
\end{lemma}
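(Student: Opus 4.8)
The plan is to show that the regular-simplex structure derived above for $\{\textbf{t}(x^k)\}_{k=1}^{M}$ forces every negated embedding to be the antipode of its positive counterpart, $\textbf{t}(\neg x^j) = -\textbf{t}(x^j)$, and that this choice makes Condition~4.3 unsatisfiable. Throughout I would assume $M \ge 3$, the regime in which Condition~4.3 is nonvacuous (for $M=2$ the footnote already makes 4.3 trivial), and argue by contradiction: suppose an ideal $C$ exists.

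First I would locate $\textbf{t}(\neg x^j)$. Semantically ``not $x^j$'' is compatible with the $M-1$ concepts $\{x^k : k \neq j\}$ and incompatible with $x^j$, so by the same optimization reasoning used in Lemma~1 its ideal placement solves
\begin{equation}
\textbf{t}(\neg x^j) = \operatorname*{argmax}_{\|\textbf{u}\|=1}\ \textbf{u}\cdot\Big(\sum_{k\neq j}\textbf{t}(x^k) - \textbf{t}(x^j)\Big).
\end{equation}
Because the $\textbf{t}(x^k)$ form a regular simplex centered at the origin, $\sum_{k=1}^{M}\textbf{t}(x^k) = 0$, hence $\sum_{k\neq j}\textbf{t}(x^k) = -\textbf{t}(x^j)$ and the maximizing direction is $-2\,\textbf{t}(x^j)$, giving $\textbf{t}(\neg x^j) = -\textbf{t}(x^j)$. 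This is also the unique point of the sphere attaining cosine similarity $-1$ with $\textbf{t}(x^j)$, so it is consistent with (indeed forced by) Conditions~4.1 and~4.2, which demand that $\textbf{t}(x^j)$ be strictly the least similar concept to $\textbf{t}(\neg x^j)$.

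Then the contradiction is immediate. With $\textbf{t}(\neg x^j) = -\textbf{t}(x^j)$ we get, for any $j\neq k$,
\begin{equation}
\textbf{t}(\neg x^j)\cdot\textbf{t}(\neg x^k) = \textbf{t}(x^j)\cdot\textbf{t}(x^k) = -\frac{1}{M-1},
\end{equation}
so the strict inequality $\textbf{t}(x^j)\cdot\textbf{t}(x^k) < \textbf{t}(\neg x^j)\cdot\textbf{t}(\neg x^k)$ required by Condition~4.3 cannot hold. In fact the ordering is strictly reversed, since $\textbf{t}(\neg x^j)\cdot\textbf{t}(x^k) = \tfrac{1}{M-1} > -\tfrac{1}{M-1} = \textbf{t}(\neg x^j)\cdot\textbf{t}(\neg x^k)$: a negated concept sits closer to an unrelated positive concept than to another negated concept, the opposite of what Condition~4.3 asks. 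Hence no ideal $C$ exists even under an isotropic concept distribution.

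The main obstacle is the first step: rigorously pinning $\textbf{t}(\neg x^j)$ to the exact antipode. This requires (i) translating the ``relaxed'' negation semantics of Condition~4 — that ``not $x^j$'' overlaps each $x^k$ for $k\neq j$, and that ``not $x^j$'' and ``not $x^k$'' jointly cover $M-2$ concepts — into the correct linear objective above, and (ii) invoking the centroid-at-origin property of the regular simplex from the Isotropy derivation (noting that if $M<N$ the extra dimensions do not help the maximizer, so the optimum still lies along $-\textbf{t}(x^j)$). One must also be slightly careful that the oracle encoder only guarantees the ideal location \emph{if one exists}; I sidestep this by assuming an ideal $C$ and deriving the contradiction. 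A robustness remark analogous to the noise-vector analysis for Lemma~2 — showing small deviations of $\textbf{t}(\neg x^j)$ from $-\textbf{t}(x^j)$ still violate 4.3 — would strengthen the argument but is not essential.
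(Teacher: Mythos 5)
Your proposal is correct and follows essentially the same route as the paper's proof: derive the antipodal placement $\textbf{t}(\neg x^j)$ from an argmax objective using the fact that the regular-simplex vectors sum to zero, then observe that negation preserves pairwise inner products, so $\textbf{t}(\neg x^j)\cdot\textbf{t}(\neg x^k) = -\tfrac{1}{M-1}$ violates (indeed reverses) Condition 4.3. The only cosmetic difference is that the paper phrases the argmax over the image embeddings $\textbf{i}(v)$ (yielding $\textbf{t}(\neg x)^* = -\textbf{i}(x)$) per Conditions 4.1--4.2, whereas you optimize directly over the text simplex; in the oracle setting this is immaterial.
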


\begin{proof}

We first derive what $\textbf{t}(x), \textbf{t}(\neg x)$ must be to respect Conditions 4.1 and 4.2. Then we see that this derivation contradicts Condition 4.3.

For $C$ to ideally represent negation, the following must be true:
\begin{equation}
    \begin{aligned}
    \textbf{t}(\text{"$\neg$ $x$"}) \cdot \textbf{i}(x) &< \textbf{t}(\text{"$\neg$ $x$"}) \cdot \textbf{i}(v)\quad (1)\\
    \textbf{t}(\text{"$\neg$ $x$"}) \cdot \textbf{i}(v) &> \textbf{t}(\text{"$x$"}) \cdot \textbf{i}(v)\quad (2)\\ \textbf{t}(\text{"$x$"}) \cdot \textbf{t}(\text{"$y$"}) &< 
    \textbf{t}(\text{"$\neg$$x$"}) \cdot \textbf{t}(\text{"$\neg$$y$"}) \quad (3)
    \\\text{ for all } v \in \mathbb{V} \backslash \{x\}
    \end{aligned}
    \label{eq:neg}
\end{equation}

To achieve Eq.~(\ref{eq:neg}.1), we solve for:
\begin{equation}
    \begin{aligned}
    \textbf{t}(\neg x) &= \operatorname*{argmax}_{\textbf{t}(\neg x)}\Big[ \sum_{v=1}^{| \mathbb{V} \backslash \{x\}|} \textbf{t}(\neg x) \cdot \textbf{i}(v) - 
\textbf{t}(\neg x) \cdot \textbf{i}(x) \Big]\\
    &= \operatorname*{argmax}_{\textbf{t}(\neg x)}\Big[  \textbf{t}(\neg x) \cdot \Big(\sum_{v=1}^{|\mathbb{V}|}\textbf{i}(v) - \textbf{i}(x) -  \textbf{i}(x)\Big) \Big]
    \end{aligned}
\end{equation}

Here, $\sum_{v=1}^{|\mathbb{V}|}\textbf{i}(v)$ is the sum of all vectors in a regular simplex, which is 0. As such, we find that:

\begin{equation}
    \textbf{t}(\neg x)^*  = -\textbf{i}(x)
\end{equation}


Given this, we can see how if $\textbf{t}(y), \textbf{t}(x)$ were orthogonal, then $\textbf{t}(x)\cdot\textbf{t}(y) = \textbf{t}(\neg x)\cdot \textbf{t}(y)$ is trivial to show. In high dimensions, this orthogonality is always approximately true, violating Condition 4.2.

But let's consider the more rigorous case where $\textbf{t}(x), \textbf{t}(y)$ are not orthogonal but rather isotropically distributed in N dimensional space. 
Recall that for two vectors in an $M-1$ dimensional simplex, $\textbf{t}(x^j) \cdot \textbf{t}(x^k) =-\frac{1}{M-1}\ \forall j \neq k$. With the above solution, we now have that:
\begin{equation}
\textbf{t}(\neg x^j) \cdot \textbf{t}(x^k) = \frac{1}{M-1} > \textbf{t}(x^j) \cdot \textbf{t}(x^k)
\end{equation}

\noindent which satisfies condition 4.2. But 4.3 fails due to the following:

\begin{equation}
\boxed{
\begin{aligned}
    \textbf{t}(\neg x^j) \cdot \textbf{t}(\neg x^k)   &= \textbf{t}(x^j) \cdot \textbf{t}(x^k)  = -\frac{1}{M-1}  \\
    \textbf{t}(\neg x^j)\cdot \textbf{t}(x^k) &> \textbf{t}(\neg x^j) \cdot \textbf{t}(\neg x^k)
    \end{aligned}
    }
\end{equation}

Let's say $x^j=$ ``chair" and $x^k=$ ``penguin". The first erroneous semantic relationship that emerges is that the distance between ``chair" and ``penguin", which are fully contradictory statements, will be equivalent to the distance between ``Not chair" and ``Not penguin". For the latter two prompts there exist a lot of images that would be a true match for both, whereas for the first prompt there exists only one. 

The second erroneous conclusion is that the cosine similarity between ``Not chair" and ``penguin" is  greater than the cosine similarity between ``Not chair" and ``Not penguin". This is semantically incorrect for the same reason as above.





\end{proof}


 


 






\section{Open Vocabulary Experimental Details}

We train two model types, one on 5k COCO images from the training split for 12 epochs, and another on 10k COCO images for 6 epochs. The images each have a hard positive and negative, where the latter is the same as the former save for two nouns being swapped.
We choose this particular type of intervention as CLIP-like VLM performance across the board was lowest for this category of hard negatives in Sugarcrepe \cite{hsieh_sugarcrepe_2023}. We evaluate this paradigm on the swap-object split of Sugarcrepe and the VG-spatial split of VL-Checklist~\cite{zhao_vl-checklist_2023} against naive and finetuned CLIP. We choose this particular split of VL-Checklist because we find that all other splits (Objects, Attributes, or Relation-Action) do not introduce new functional words and are thus not applicable for testing dynamic FR updates. 

All LLM queries were made to OpenAI's gpt4o-mini model, with a temperature of 0.7. At the time of evaluation, this was the most affordable model on the API at: \$ 0.150 / 1M input tokens and \$0.600 / 1M output tokens.

\subsection{LLM System Prompts}

\begin{table}[t]
\centering
\begin{tcolorbox}[colback=gray!5.5!white,
   colframe=black!65!black, boxrule=0.3pt, rounded corners, arc=2mm]
\textbf{Original Prompts:}\vspace{0.5em}\\
{\small['A baby elephant stands next to its mother.', 'A desktop computer sitting on top of a wooden desk.', 'A mother stands next to its baby elephant.', 'A wooden computer sitting on top of a desk.']}\vspace{0.5em}\\
\textbf{Original Lookup Table:}\vspace{0.5em}\\
{\small{[ 'above', 'below', 'many', 'no', 'small', 'big', 'not', 'without', 'left', 'right', 'absent', 'but', 'large' ]}\vspace{0.5em}}\\ 
\textbf{Simplified Prompts:}\vspace{0.5em}\\
{\small[ 'Baby elephant next to mother', 'Desktop computer on wooden desk', 'Mother next to baby elephant', 'Wooden computer on desk' ]}\vspace{0.5em}\\
\textbf{New Lookup Table:}\vspace{0.5em}\\
{\small['above', 'below', 'many', 'no', 'small', 'big', 'not', 'without', 'left', 'right', 'absent', 'but', 'large', 'near', 'on']}\vspace{0.5em}\\
\textbf{New Prompts:}\vspace{0.5em}\\
{\small['Baby elephant $\langle$NEAR$\rangle$ mother', 'Desktop computer $\langle$ON$\rangle$ wooden desk', 'Mother $\langle$NEAR$\rangle$ baby elephant', 'Wooden computer $\langle$ON$\rangle$ desk']}\vspace{0.5em}\\
\end{tcolorbox}
\caption{Examples of LLM-in-the-loop natural language simplification and FR extraction.}
\label{tab:prompt_example}
\vspace{-1.2em}
\end{table}

\begin{table*}[t]
\centering
\begin{tcolorbox}[colback=gray!5.5!white,
   colframe=black!65!black, boxrule=0.3pt, rounded corners, arc=2mm]
\textbf{System prompts:}\vspace{0.5em}\\
Given a list of sentences, reformat each sentence to the simplest phrases that would distinguish it from the other examples. Here are some formatting rules to follow:\vspace{0.5em}\\

1. If a sentence contains OBJECTS and ATTRIBUTES which belong to that object, the ATTRIBUTE must always come first. For example, given the sentence ``A dog which is purple", reformat it to ``A purple dog". \vspace{0.5em}\\

2. If a sentence contains NEGATION, the NEGATING TERM always comes before the OBJECT clause. For example, given ``This image contains a chicken but a butterfly is absent", reformat it to ``Chicken but no butterfly". \vspace{0.5em}\\

3. If a sentence contains PREPOSITIONs, try your best to make sure that the OBJECTS the PREPOSITION is describing are immediately before and after the PREPOSITION. For example, given ``A bug which is flying much farther up from the bench", reformat it to ``A flying bug above a bench".\vspace{0.5em}\\

4. Whenever possible, reformat VERBs to be ATTRIBUTES. For example, given ``A dog dancing while his owner is jumping", reformat it to ``Dancing dog and jumping owner".\vspace{0.5em}\\

5. If two sentences are very close to each other, reduce them down to the salient components. For example, given the sentences ["Butterflies in the clouds, a cat squatting looking up at it, and a man standing behind the cat watching it, on the grass with a tree.", ``Butterflies in the clouds, a cat squatting looking up at it, and a man sitting behind the cat watching it, on the grass with a tree."], return: {0: ``A man standing", 1: ``A man sitting"}. (Of course, if there are other sentences in the list that are similar, you may want to keep more details so that the sentences are still distinguishable.)\vspace{0.5em}\\

Here are some more general examples. If given the following sentences: ["A desktop computer sitting on top of a gray oak table lights up the room" ,"A gray oak computer sitting on top of a desktop table lights up the room", ``A kitchen has metal cabinets and black countertops with shiny lights on top.", ``A kitchen has black cabinets and metal countertops with shiny lights on top."], return:
{0: ``Desktop computer top of gray oak table", 1: ``Gray oak computer top of desktop table", 2: ``Metal cabinets and black countertops", 3: ``Black cabinets and metal countertops"}.\vspace{0.5em}\\

As a rule, be AS CONCISE AS POSSIBLE. If any information is repeated and unnecessary to keep in order to distinguish that text prompt from the others, discard it.\vspace{0.5em}\\

Now, reformat the following list of sentences and return the JSON output. Do not include anything other than the JSON array in your answer.
\end{tcolorbox}
\caption{Prompt template for simplifying natural language prompts.}
\label{tab:prompt_simplify_language}
\vspace{-1.2em}
\end{table*}

\begin{table*}[t]
\centering
\begin{tcolorbox}[colback=gray!5.5!white,
   colframe=black!65!black, boxrule=0.3pt, rounded corners, arc=2mm]
\textbf{System prompts:}\vspace{0.5em}\\
You are given: A LOOKUP LIST of functional words (e.g., {\small["ABOVE", ``BELOW", ``INSIDE OF", ``MANY", ``SMALL", ``NO"]}). A list of SENTENCES to process. Definitions: Functional words include: (a) Prepositions (e.g., ABOVE, BELOW, INSIDE OF, ON, IN, NEAR) or their synonyms. (b) Size/shape terms (e.g., SMALL, BIG). (c) Numerical terms (e.g., ONE, TWO, THREE, etc.). If a number is greater than 5 (e.g., SEVEN, 100), replace it with ``MANY". (d) Negatory terms (e.g., NO, WITHOUT). Non-functional words: Do not include verbs, adjectives, or any nouns unrelated to the functional categories above. Examples of non-functional words include ``jumping", ``sleeping", ``cat", ``man", etc. These should not be added to the LOOKUP LIST, even if they appear in the sentences. Even if there are two sentences that are very similar, do not try to distinguish them by adding these verbs, adjectives, or nouns to the LOOKUP LIST. Any form of a verb, including present participles, may not go in the LOOKUP LIST, no matter how frequently it appears in the sentences.\vspace{0.5em}\\
Rules: For each sentence: Identify any functional words or synonyms (including numbers). If a functional word or one of its synonyms (by meaning) appears in the sentence and is already in the LOOKUP LIST, replace it in the sentence with the LOOKUP LIST key, surrounded by angle brackets (e.g., ``$\langle$ ABOVE$\rangle$ ``). If that functional word is not in the LOOKUP LIST (and it is truly functional by the above  definition), add it to the LOOKUP LIST, then replace its appearance with that new all-caps key in angle brackets. Do not add duplicates to the LOOKUP LIST. Do not add verbs, adjectives, or any non-functional words to the LOOKUP LIST. Replace numbers greater than 5 with ``MANY" (add ``MANY" to the list if not already present). After processing all sentences, output exactly one JSON array containing two sub-arrays: The first sub-array: the UPDATED LOOKUP LIST (only functional words, no duplicates). The second sub-array: the FINAL TRANSFORMED SENTENCES (with functional words surrounded by $\langle$  $\rangle$ ). Not every sentence needs functional words. Provide no additional commentary or text besides this JSON structure. Example of the required output format: {\small[ [ ``ABOVE", ``INSIDE OF", ``MANY" , ``RIGHT OF"], [ ``A bird $\langle$ ABOVE$\rangle$  a tree", ``Fifteen dogs is $\langle$ MANY$\rangle$  dogs" , ``A sitting chicken is $\langle$ INSIDE OF$\rangle$  a house"] ] Example of wrong output: [ [ ``ABOVE", ``INSIDE OF", ``MANY" , ``RIGHT OF", ``SITTING", ``DOGS", ``HOUSE"], [ ``A bird $\langle$ ABOVE$\rangle$  a tree", ``Fifteen dogs is $\langle$ MANY$\rangle$  dogs" , ``A sitting chicken is $\langle$ INSIDE OF$\rangle$  a house"] }]\vspace{0.5em}\\

Before you return the output, CHECK THAT THE LOOKUP LIST ONLY CONTAINS FUNCTIONAL WORDS.
\end{tcolorbox}
\caption{Prompt template for extracting functional words.}
\label{tab:prompt_functional_words}
\vspace{-1.2em}
\end{table*}

\begin{table*}
\centering
\begin{tcolorbox}
[colback=gray!5.5!white,
   colframe=black!65!black, boxrule=0.3pt, rounded corners, arc=2mm]
\textbf{Swapping Objects:}
   
Given an input sentence describing a scene, your task is to first locate two swappable noun phrases in the sentence, and then swap them to make a new sentence.
The new sentence must meet the following three requirements:
1. The new sentence must be describing a different scene from the input sentence.
2. The new sentence must be fluent and grammatically correct.
3. The new sentence must make logical sense.
\vspace{0.5em}\\
To complete the task, you should:
1. Answer the question of whether generating such a new sentence is possible using ``Yes" or ``No".
2. Output the swappable noun phrases.
3. Swap them to make a new sentence.
\vspace{0.5em}\\
Please produce a **single JSON array** (no extra text or explanation) for each input sentence.
If there are K input sentences, return a list with K JSON objects separated by commas.
Each element in the array must be a JSON object with the following structure:
\{
  ``possible": ``$\langle$Yes or No$\rangle$",
  ``swappable-noun-phrases": ["$\langle$NP1$\rangle$", ``$\langle$NP2$\rangle$"],
  ``swapped-sentence": ``$\langle$the swapped sentence$\rangle$"
\}
\vspace{0.5em}\\
Example JSON output for the original sentence: ``A cat resting on a laptop next to a person."
\{
  ``possible": ``Yes",
  ``swappable-noun-phrases": ["laptop", ``person"],
  ``swapped-sentence": ``A cat resting on a person next to a laptop." \}
\end{tcolorbox}
\caption{Prompt template for creating captions for swapped objects. This is a minorly changed version from \cite{hsieh_sugarcrepe_2023}.}
\label{tab:prompt_swap_obj}
\end{table*}

\end{document}